\newtheorem{theorem}{Theorem}[section]
\newtheorem{lemma}[theorem]{Lemma}
\newlength\myindent
\title{Kernel-based Graph Learning from Smooth Signals: A Functional Viewpoint}
\author{Xingyue Pu,
        Siu Lun Chau,
        Xiaowen Dong,
        and Dino Sejdinovic
\thanks{Xingyue Pu and Xiaowen Dong are with the Oxford-Man Institute and the Department of Engineering Science, University of Oxford, Oxford OX2 6ED, UK (e-mail: xpu@robots.ox.ac.uk; xdong@robots.ox.ac.uk).}
\thanks{Siu Lun Chau and Dino Sejdinovic are with the Department of Statistics, University of Oxford, Oxford OX1 3LB, UK (e-mail: siu.chau@stats.ox.ac.uk; dino.sejdinovic@stats.ox.ac.uk).}
}
\begin{document}

\maketitle

\begin{abstract}
The problem of graph learning concerns the construction of an explicit topological structure revealing the relationship between nodes representing data entities, which plays an increasingly important role in the success of many graph-based representations and algorithms in the field of machine learning and graph signal processing. In this paper, we propose a novel graph learning framework that incorporates the node-side and observation-side information, and in particular the covariates that help to explain the dependency structures in graph signals. To this end, we consider graph signals as functions in the reproducing kernel Hilbert space associated with a Kronecker product kernel, and integrate functional learning with smoothness-promoting graph learning to learn a graph representing the relationship between nodes. The functional learning increases the robustness of graph learning against missing and incomplete information in the graph signals. In addition, we develop a novel graph-based regularisation method which, when combined with the Kronecker product kernel, enables our model to capture both the dependency explained by the graph and the dependency due to graph signals observed under different but related circumstances, e.g. different points in time. The latter means the graph signals are free from the $i.i.d.$ assumptions required by the classical graph learning models. Experiments on both synthetic and real-world data show that our methods outperform the state-of-the-art models in learning a meaningful graph topology from graph signals, in particular under heavy noise, missing values, and multiple dependency.
\end{abstract}

\begin{IEEEkeywords}
Graph learning, graph signal processing, kernel methods, functional viewpoint
\end{IEEEkeywords}

%
\IEEEpeerreviewmaketitle

\section{Introduction}
\label{sec:Introduction}


\begin{figure*}
    \centering
    \includegraphics[width = 0.85\linewidth]{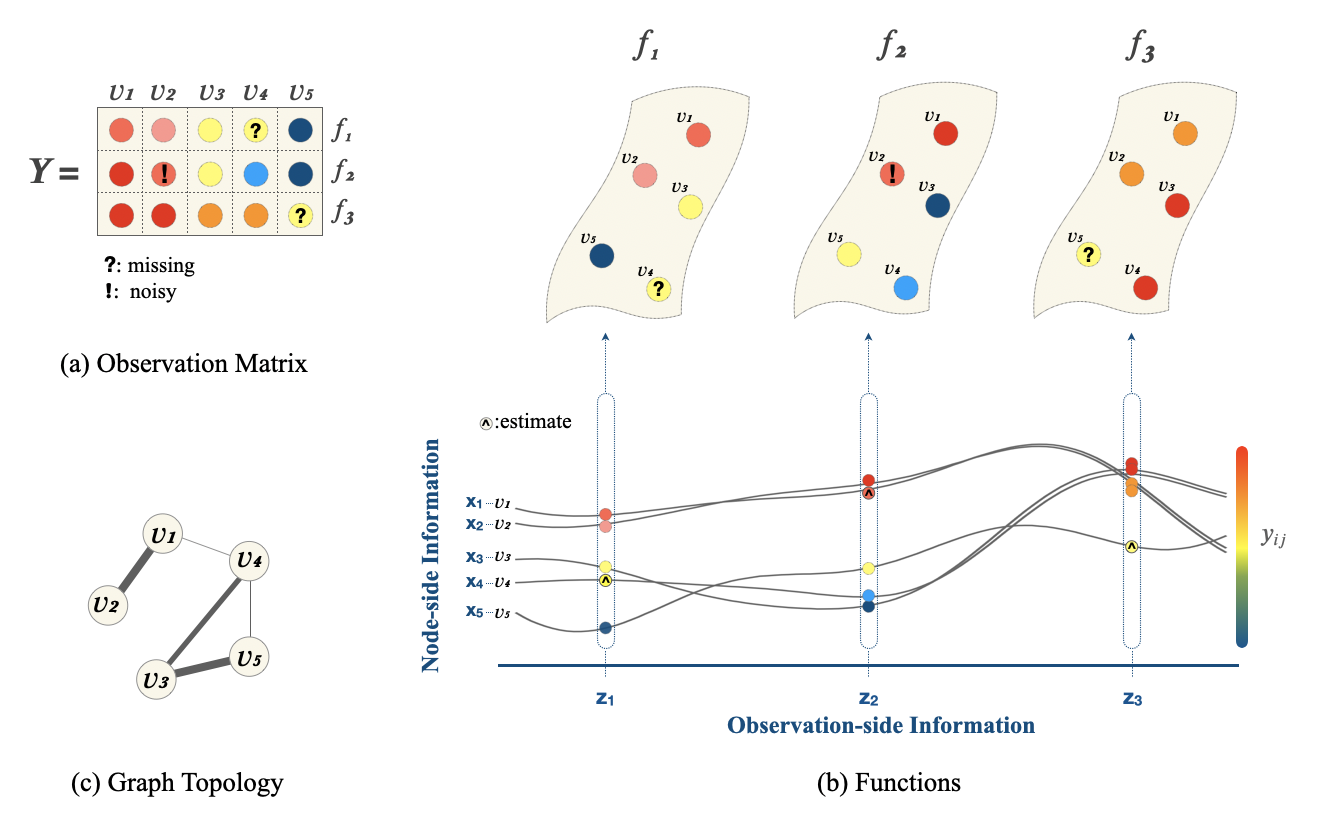}
    \caption{A functional viewpoint of graph learning: (a) The observation matrix with missing and noisy entries; (b) Each row of the observation matrix is modelled as samples obtained at a collection of fixed locations (considered as nodes in a graph) from an underlying function $f$ (top); Values at each node are determined by the underlying function, as well as node-side information $\mathbf{x}$ and observation-side information $\mathbf{z}$ (bottom); (c) The learned graph topology.}
    \label{fig:illustration}
\end{figure*}

Modelling based on graphs has recently attracted an increasing amount of interest in machine learning and signal processing research. On the one hand, many real-world data are intrinsically graph-structured, e.g. individual preferences in social networks or environmental monitoring data from sensor networks. This makes graph-based methods a natural approach to analysing such structured data. On the other hand, graphs are an effective modelling language for revealing relational structure in complex domains and may assist in a variety of learning tasks. 
For example, knowledge graphs improve the performance in semantic parsing and question answering \cite{berant2013semantic}. Despite their usefulness, however, a graph is not always readily available or explicitly given. The problem of graph learning therefore concerns the construction of a topological structure among entities from a set of observations on these entities. \par

Methodologies to learn a graph from the structured data include naïve methods such as $k$-nearest neighbours ($k$-NN), and approaches from the literature of probabilistic graphical models (PGMs) and more recently graph signal processing (GSP) and graph neural networks (GNNs). The basic idea of $k$-NN is to connect a node to $k$ other nodes with the smallest pairwise distances in terms of the observations \cite{Andoni2006, Muja2009a, Bentley1975a, Dong2011a}. In PGMs, a graph expresses the conditional dependence with edges between random variables represented by nodes \cite{lauritzen1996graphical}. 
The GSP literature, on the other hand, focuses on algebraic and spectral characteristics of the graph signals \cite{shuman2013emerging, Ortega2018,sandryhaila2013discrete}, which are defined as observations on a collection of nodes. The GSP-based graph learning methods (see \cite{mateos2019connecting,dong2019learning} for two recent reviews) further fall into two distinct branches, i.e. those based on the diffusion processes on graphs \cite{thanou2017learning, Pasdeloup2018,Shafipour2018,segarra2017network} and those based on smoothness measures of graph signals \cite{kalofolias2016learn,Dong2016a,egilmez2017graph,chepuri2017learning,berger2020efficient}. Very recently, GNNs have attracted a surging interest in the machine learning community which leads to a number of approaches to graph inference \cite{kipf2018neural,franceschi2019learning}.





While many of the above methods can effectively learn a meaningful graph from observations, there is a lack of consideration of the additional information, i.e. node-side or observation-side covariates, which may be available for the task at hand. Those covariates that provide valuable side information should be integrated into the graph learning framework. Taking an example of measuring temperature records in different locations in a country, where nodes represent weather stations, the latitude, longitude and altitude of each station are useful node-side information. One major benefit is to lessen the reliance of the above models on the quality of the observations. Heavily corrupted or even missing records can be predicted from the relationship between the observations and the side information, which in turn helps improve the efficiency in graph inference. \par

Furthermore, although node-side dependency is inherently accounted for in the process of graph learning, the observation-side dependency is largely ignored in the literature.  One example are temperature records collected at different timestamps, which could largely affect the evaluation of the strength of relation between stations. 
Another example is that of a recommender system, where the item ratings collected from different individuals are largely affected by the social relationship between them.  \par

To tackle the above issues, we revisit the graph signal observations from a functional viewpoint and propose a framework for learning undirected graphs by considering additional covariates on both the node- and observation-side. This allows us to capture dependency structure within the graph signals which leads to more effective graph and signal recovery. More specifically, as shown in Figure \ref{fig:illustration}, 
the $ij$-th entry of the graph-structured data matrix $\mathbf{Y} \in \mathbb{R}^{n \times m}$, which contains $n$ graph signals collected on $m$ nodes, can be viewed as some potentially noisy or missing observation of $f_i(j)$, i.e. the $i$-th function evaluated at $j$-th node. To model the node-side information, we introduce a covariate $\mathbf{x} \in \mathcal{X}$ that can explain the variations in a graph signal, e.g. a vector that contains the latitude, longitude and altitude of stations in the aforementioned temperature example. 
To model the observation-side information, we also introduce a generic covariate $\mathbf{z} \in \mathcal{Z}$. For example, $\mathbf{z}$ could be the timestamp at which the temperature record is collected. Observation-side dependency hence arises due to $f_i$ depending on $\mathbf{z}_i$. 
Combining the two, the function underlying the graph signals takes the form of 
$f_i(j) = f(\mathbf{z}_i, \mathbf{x}_j)$.


Specifically, we define the function $f: \mathcal{Z} \times \mathcal{X} \rightarrow \mathbb{R}$ in a reproducing kernel Hilbert space (RKHS) with a product kernel $\kappa_{\otimes} = \kappa_{\mathcal{Z}} \otimes \kappa_{\mathcal{X}}$ on $\mathcal{Z} \times \mathcal{X}$. At the same time, the two-side dependency in $\mathbf{Y}$ is encoded in a Kronecker product of two graph Laplacian matrices $\mathbf{L}_{\otimes} = \mathbf{L}_x \otimes \mathbf{L}_z$, where $\mathbf{L}_x$ represents the connectivity between nodes to be learned and $\mathbf{L}_z$ represents the observation-side dependency, essentially a nuisance dependency for the graph learning problem. We assume $\mathbf{L}_z$ can be captured by evaluating $\kappa_{\mathcal{Z}}$ at the observation-side covariates $\mathbf{z}$. Our key contribution is the Kernel Graph Learning (\textbf{KGL}) framework, which allows us to infer $\mathbf{L}_x$ by jointly learning the function $f$ and optimising for a novel Laplacian quadratic form that effectively expresses the smoothness of $\mathbf{Y}$ over $\mathbf{L}_{\otimes}$. \par

In addition, we provide several extensions of \textbf{KGL} for the scenario of a partially observed $\mathbf{Y}$ with known missing value positions, and that of observations without either node-side or observation-side information. The learning problem is effectively solved via a block coordinate descent algorithm, which has a theoretical guarantee of convergence. We show that \textbf{KGL} can effectively recover the groundtruth graph from the two-side dependent data and outperform the state-of-the-art smoothness-based graph learning methods in both synthetic and real-world experiments. \par


In summary, the main contributions of our work are as follows:
\begin{itemize}
    \item A novel graph-based regularisation based on a smoothness measure of dependent graph signals over the Kronecker product of two graph Laplacian matrices;
    \item A graph learning framework that integrates node- and observation-side covariates from a functional viewpoint;
    \item An efficient method for denoising and imputing missing values in the observed graph signals as a byproduct of the graph learning framework. 
\end{itemize}


\section{Related Work}
\label{sec: related_work}

In this section, we survey the classical methods of learning a graph from a number of different perspectives. 
From each perspective, we highlight the most related work that considers one or more aspects of the 1) node-side information, 2) observation-side dependency, and 3) noisy and missing data. \par

\subsection{$k$-Nearest Neighbours Methods}

The $k$-nearest neighbours ($k$-NN) connects a node to $k$ other nodes with the smallest pairwise distances in terms of the observations. It is flexible with different choices of distance metrics and yet heuristic since the neighbourhood search is based on pairwise comparison of observations on nodes. The majority of the $k$-NN variants focuses on fast approximate search algorithms (ANN) \cite{Bentley1975a, Andoni2006, Muja2009a,  Dong2011a} and recent variants apply deep reinforcement learning to explicitly maximise search efficiency \cite{Xiong2017,Zoph2019}. By comparison, the model-based methods, e.g. PGMs and GSP, directly integrate global properties of the observations into learning objectives. \par

\subsection{Probabilistic Graphical Models}

In the field of PGMs, the inverse covariance matrix $\Theta$ is often regarded as an undirected graph that parameterises the joint distribution of random variables representing nodes. There is a rich literature on effective algorithms to estimate a sparse $\Theta$ from Gaussian-distributed data by solving an $\ell_1$-regularised log-likelihood maximisation \cite{Banerjee2008ModelData, DAspremont2008, Mazumder2012, Hsieh2011, Zhang2014}, including the widely used graphical Lasso \cite{Friedman2008a} and G-ISTA \cite{Guillot2012}. The recent state-of-the-art algorithm BigQUIC \cite{Hsieh2013e} scales to millions of nodes with performance guarantees. 
Besides computational improvements, models based on attractive Gaussian Markov Random Fields (GMRFs) \cite{Lake2010a, Slawski2015a, Egilmez2017b, Deng2020} further restrict the off-diagonal entries of $\Theta$ to be non-positive, which is equivalent to learning the Laplacian matrix of the corresponding graph with non-negative edge weights. 
The most related extensions of the graphical Lasso were proposed in \cite{stegle2011efficient,greenewald2019tensor}, which simultaneously learn two dependency structures in the matrix-variate Gaussian data. While their work focuses on estimating covariance matrices, our work focuses on recovering a graph topology from data. 

\subsection{Structural Equation Models}

Structural equation models (SEMs) are another type of models (similar to PGMs) that is widely used to learn a directed acyclic graph (DAG) that encodes the conditional dependence of random variables \cite{zheng2018dags, Giannakis2018a, pamfil2020dynotears}. Based on SEMs, the authors in \cite{ioannidis2018semi} proposed a block coordinate descent algorithm to solve the joint optimisation problem of denoising the data and learning a directed graph. The joint learning framework is further extended to time series \cite{ioannidis2019semi}, where the structural vector autoregressive models (SVARMs) replace the linear SEMs to handle temporal dependency. The main difference from our work is that their denoising function is an identity mapping without side information as covariates. The work in \cite{pamfil2020dynotears} also considers the temporal dependency in learning a DAG with SVARMs, but does not consider the denoising scenario. \par

\subsection{Graph Signal Processing}

In the context of GSP, every observation on a collection of nodes is defined as a graph signal. GSP-based graph learning models have seen an increasing interest in the literature \cite{mateos2019connecting,dong2019learning} and further fall into two distinct branches. The first branch assumes graph signals are outcomes of diffusion processes on graphs and reconstructs a graph from signals according to the diffusion model \cite{thanou2017learning,Pasdeloup2018,Shafipour2018,segarra2017network}. The other branch constructs a graph by promoting the global smoothness of graph signals, which is defined by the Laplacian quadratic form \cite{kalofolias2016learn,Dong2016a} or more generally via total variation \cite{berger2020efficient}. Smoothness-based methods are related to GMRFs by recognising that the Laplacian quadratic form is closely related to the log-likelihood of the precision matrix defined as the graph Laplacian. Our work can be regarded as an extension to smoothness-based graph construction. \par

In the literature of smoothness-based GSP graph learning, the authors in \cite{Dong2016a, berger2020efficient} adopt a two-step learning algorithm to learn an undirected graph while denoising graph signals. They simply assume an identity mapping between the actual graph signals and noisy observations, which is different from our work that considers side information. The most related work is proposed in \cite{venkitaraman2019predicting}, which uses kernel ridge regression with observation-side covariates to infer graph signals. However, their work mainly focuses on data prediction and graph learning is only a byproduct in their approach. In Section \ref{sec: syn_exp_general_settings}, we will show, both theoretically and empirically, that their method 
uses a smoothness term that imprecisely incorporates the observation-side dependency in the learned graph structure, leading to an inferior performance in learning a graph. \par

In terms of the observation-side dependency, there exist some GSP graph learning models that consider temporal dependency in graph signals. A so-called spatiotemporal smoothness was proposed in \cite{Liu2019a,liu2019graph} to transform the graph signals using a temporally weighed difference operator. If every timestamp is equally important, the operator is equivalent to a prepossessing step to make the time series observed on each node stationary.
It should be noted that there is another branch of research assuming that the temporal dependency in graph signals originates in the dynamic changes in the edges \cite{Kalofolias2017a,Yamada2019e}, and therefore the problem is formulated as learning a dynamic series of graphs, which is different from the goal of our paper. \par

\subsection{Graph Neural Networks}

A new branch of graph learning models is developed from the perspective of GNNs. Essentially, GNNs discover the patterns in graph-structured data in a hierarchical manner \cite{zhang2020deep, wu2020comprehensive, bronstein2017geometric}. 
The activations at intermediate layers, e.g. the $l$-th layer, can be interpreted as a new representation for the nodes in the embedding space that incorporates the information from a specifically defined neighbourhood of the nodes.
The authors in \cite{Hadziosmanovic2019a, kipf2018neural} thus defined the strength of connectivity between nodes $i$ and $j$ based on the pairwise similarity of their embeddings $h_i^{(l)}$ and $h_j^{(l)}$ at the $l$-th layer of the GNN architecture. The authors in \cite{wu2020connecting} extended this method to construct a directed graph 
in the process of training a GNN that deals with time series data.
The main goal of these methods is to improve the performance of node-related tasks (e.g. classification or prediction) and graph learning is only a byproduct, whose performance is often not guaranteed. The recent works in \cite{Jiang2019,Yang2019,Chen2019,franceschi2019learning} start to incorporate an additional loss for recovering graphs while training the GNNs. However, a significant limitation of most GNN-based methods is that they typically require a large volume of training data and the learned connectivity is often less explainable compared to PGM and GSP methods. \par

\section{Preliminaries}
\label{sec:preliminary}

\subsection{Smoothness-Based GSP Graph Learning}
\label{sec:preliminary_GSP_GL}

Observing a data matrix $\mathbf{Y} \in \mathbb{R}^{n \times m}$ whose $ij$-th entry $y_{ij}$ corresponds to the observation on the $j$-th node in the $i$-th graph signal, we are interested in constructing an undirected and weighted graph $\mathcal{G} = \{\mathcal{V}, \mathcal{E}, \mathbf{W}\}$. The node set $\mathcal{V}$ represents a collection of variables, where $|\mathcal{V}| = m$. The edge set $\mathcal{E}$ represents the relational structure among them to be inferred. The structure is completely characterised by the weighted adjacency matrix $\mathbf{W}$ whose $jj'$-th entry is $w_{jj'}$. If two nodes $j$ and $j'$ are connected by an edge $e_{jj'} \in \mathcal{E}$ then $w_{jj'} > 0$, else if $e_{jj'} \notin \mathcal{E}$ then $w_{jj'} = 0$. The graph Laplacian matrix is defined as $\mathbf{L} = \text{diag}(\mathbf{W}\boldsymbol{1}) - \mathbf{W}$, where $\boldsymbol{1}$ denotes the all-one vector. $\mathbf{L}$ and $\mathbf{W}$ are equivalent and complete representations of the graph on a given set of nodes. \par

In the literature of GSP, one typical approach of constructing a graph from $\mathbf{Y}$ is formulated as minimising the variation of signals on graphs as measured by the Laplacian quadratic form\footnote{We acknowledge that the conventional form of the Laplacian quadratic in GSP literature is $\text{Tr}(\mathbf{Y}^{\top} \mathbf{L} \mathbf{Y})$, where each column of $\mathbf{Y}$ corresponds to a graph signal. In our case, $\mathbf{Y}$ has two-side dependency such that either a column or a row may be regarded as a graph signal. The term $\text{Tr}(\mathbf{Y} \mathbf{L} \mathbf{Y}^{\top})$ measures the smoothness of row vectors over a column graph. This formulation is however consistent with the statistical modelling convention where each column in $\mathbf{Y}$ is often regarded as a random variable and the graph of main interest is the column graph.} \cite{shuman2013emerging,kalofolias2016learn,Dong2016a}:
\begin{equation}
    \label{eq: smoothnessL}
    \min_{\mathbf{L} \in \mathcal{L}} ~ \text{Tr}(\mathbf{Y} \mathbf{L} \mathbf{Y}^{\top}) + \lambda \Omega(\mathbf{L}) 
\end{equation}%
where $\mathcal{L} = \{\mathbf{L} | \mathbf{L}\mathbf{1} = 0,  \mathbf{L}_{jj'} = \mathbf{L}_{j'j} \leq 0, \forall j \neq j' \}$ defines the space of valid Laplacian matrices, and $\Omega(\mathbf{L})$ is a regularisation term with a hyperparameter $\lambda > 0$. Equivalently, the problem can be formulated using the weighted adjacency matrix $\mathbf{W}$ such that
\begin{equation}
\min_{\mathbf{W} \in \mathcal{W}} ~ \frac{1}{2} \sum_{i = 1}^n \sum_{j, j'} w_{jj'} (y_{ij} - y_{ij'})^2 + \lambda \Omega(\mathbf{W}) 
\end{equation}%
where $\mathcal{W} = \{\mathbf{W} | \text{diag}(\mathbf{W}) = \mathbf{1}, w_{jj'} = w_{jj'} \geq 0, \forall j \neq j' \}$ defines the space of valid weighted adjacency matrices. Popular choices of regularisation include the sum barrier $\Omega(\mathbf{L}) = ||\mathbf{L}||_F^2$ and  $\Omega(\mathbf{L}) = |\text{Tr}(\mathbf{L}) - m|$ (often added as a constraint such that $\text{Tr}(\mathbf{L}) = m$) to prevent trivial solutions where all edge weights are zero and meanwhile controlling the variations of edge weights \cite{Dong2016a}, or the log-barrier $\Omega(\mathbf{W}) = \mathbf{1}^{\top} \log(\mathbf{W}\mathbf{1})$ to prevent isolated nodes and promote connectivity \cite{kalofolias2016learn}. 

With a fixed Frobenius norm for $\mathbf{Y}$, a small value of the objective in Eq.(\ref{eq: smoothnessL}) implies that $\mathbf{Y}$ is smooth on $\mathcal{G}$ in the sense that neighbouring nodes have similar observations. The authors in \cite{Dong2016a} further propose a probabilistic generative model of the noise-free smooth observations $\mathbf{Y} = [\mathbf{y}_1, \mathbf{y}_2, \dots, \mathbf{y}_n]^{\top}$ such that
\begin{equation}
\label{eq: generative_Dong}
    \mathbf{y}_i \overset{i.i.d.}{\sim} \mathcal{N}(0, \mathbf{L}^\dagger),\quad i = 1,2,\dots,n
\end{equation}%
where $(\cdot)^\dagger$ denotes the Moore-Penrose pseudo-inverse of a matrix. 
This leads to a graph learning framework which solves an optimisation problem similar to Eq.(\ref{eq: smoothnessL}). \par


\subsection{Kronecker Product Kernel Regression}
\label{sec:preliminary_RKHS}

Taking a functional viewpoint on the generation of graph-structured data matrix, we can make use of the well-studied formalism of Kronecker product kernel ridge regression to infer the latent function \cite{saatcci2012scalable}. Specifically, we consider $f:\mathcal{Z} \times \mathcal{X} \rightarrow \mathbb{R}$ to be an element of a reproducing kernel Hilbert space (RKHS) corresponding to the product kernel function $\kappa_\otimes  = \kappa_{\mathcal{Z}} \otimes \kappa_{\mathcal{X}}$ on $\mathcal{Z} \times \mathcal{X}$, where $\otimes$ denotes the Kronecker product.\par

A kernel function can be expressed as an inner product in a corresponding feature space, i.e. $\kappa_{\mathcal{Z}}(\mathbf{z}_i, \mathbf{z}_{i'}) = \langle \phi_{\mathcal{Z}}(\mathbf{z}_i), \phi_{\mathcal{Z}}( \mathbf{z}_{i'})\rangle_{\mathcal{H}_{\mathcal{Z}}}$ where $\phi_{\mathcal{Z}}: \mathcal{Z} \rightarrow {\mathcal{H}_{\mathcal{Z}}}$ and $\kappa_{\mathcal{X}}(\mathbf{x}_i, \mathbf{x}_{i'}) = \langle \phi_{\mathcal{X}}(\mathbf{x}_i), \phi_{\mathcal{X}}( \mathbf{x}_{i'})\rangle_{\mathcal{H}_{\mathcal{X}}}$, where $\phi_{\mathcal{X}}: \mathcal{X} \rightarrow {\mathcal{H}_{\mathcal{X}}}$. An explicit representation of feature maps $\phi_{\mathcal{X}}$ and $\phi_{\mathcal{Z}}$ is not necessary and the dimension of mapped feature vectors could be high and even infinite. By the representer theorem, the function $f$ that fits the data $\mathbf{Y}$ takes the form
\begin{equation}
\begin{split}
\label{eq: RKHSfunction}
    f(\mathbf{z}, \mathbf{x}) = \sum_{i = 1}^n \sum_{j=1}^m a_{ij} \kappa_{\mathcal{Z}}(\mathbf{z}_i, \mathbf{z})\kappa_{\mathcal{X}}(\mathbf{x}_j, \mathbf{x})
\end{split}
\end{equation}%
where $a_{ij} \in \mathbb{R}$ are the coefficients to be learned, and the estimated value $\hat{y}_{ij} = f(\mathbf{z}_i, \mathbf{x}_j)$. Denoting the corresponding kernel matrices as $\mathbf{K}_z$ and $\mathbf{K}_x$, where $(\mathbf{K}_z)_{ii'} = \kappa_{\mathcal{Z}}(\mathbf{z}_i, \mathbf{z}_{i'})$ and $(\mathbf{K}_x)_{jj'} = \kappa_{\mathcal{X}}(\mathbf{x}_j, \mathbf{x}_{j'})$, we have the matrix form
\begin{equation}
\label{eq: RKHSfunction_matrix}
    \mathbf{\hat{Y}} = \mathbf{K}_z \mathbf{A} \mathbf{K}_x
\end{equation}%
where $\mathbf{\hat{Y}}$ is an approximation to $\mathbf{Y}$ and the coefficient matrix $\mathbf{A} \in \mathbb{R}^{n \times m}$ has the $ij$-th entry as $a_{ij}$. We assume the observation $\mathbf{Y}$ is a noisy version of $\mathbf{\hat{Y}}$, where the noise is $i.i.d.$ normally distributed random variables such that $\epsilon_{ij} = \mathbf{Y}_{ij} - \mathbf{\hat{Y}}_{ij} \sim \mathcal{N}(0, \sigma^2_\epsilon)$, where $\sigma^2_\epsilon$ measures the noise level. This leads to a natural choice of the sum of squared errors as the loss function.    \par

A standard Tikhonov regulariser is often added to the regression model to reduce overfitting and penalise complex functions, which is defined in our case as 
\begin{equation}
\begin{split}
    ||f||^2_{\mathcal{H}_{\otimes}} & = \text{vec}(\mathbf{A})^{\top} (\mathbf{K}_x \otimes \mathbf{K}_z) \text{vec}(\mathbf{A}) \\
    & = \text{Tr}( \mathbf{K}_z \mathbf{A} \mathbf{K}_x \mathbf{A}^{\top})
\end{split}
\end{equation}%
where $\text{vec}(\cdot)$ is the vectorisation operator for a matrix. 
We now arrive at the following optimisation problem to infer the function $f(\mathbf{z}, \mathbf{x})$ that approximates the observation matrix $\mathbf{{Y}}$ such that
\begin{equation}
\label{eq: kronecker_regression}    
\min_{\mathbf{A}} ~ ||\mathbf{Y} - \mathbf{K}_z \mathbf{A}  \mathbf{K}_x||_F^2  + \lambda \text{Tr}(\mathbf{K}_z \mathbf{A}  \mathbf{K}_x \mathbf{A}^{\top}) 
\end{equation}%
where the hyperparameter $\lambda > 0$ controls the penalisation of the complexity of the function to be learned. \par


To have a better understanding of how this model is expressive for the two-side dependency, we show that the objective in Eq.(\ref{eq: kronecker_regression}) can be derived from a Bayesian viewpoint. 
In the vector form, i.e. $\mathbf{a} = \text{vec}(\mathbf{A})$ and $\mathbf{y} = \text{vec}(\mathbf{Y})$, we assume that both the data likelihood $p(\mathbf{y}| \mathbf{a})$ and the prior $p(\mathbf{a})$ follow a Gaussian distribution:
\begin{subequations}
\begin{align}
    \mathbf{y}| \mathbf{a} & \sim \mathcal{N}((\mathbf{K}_x \otimes \mathbf{K}_z) \mathbf{a}, \sigma^2_{\epsilon}\mathbf{I}_{nm})  \label{eq: y_generative_data_likelhood}, \\ 
    \mathbf{a} & \sim \mathcal{N}(\mathbf{0}_{nm}, \mathbf{K}_x^{\dagger} \otimes \mathbf{K}_z^{\dagger}),  \label{eq: y_generative_a_prior}%
\end{align}
\label{eq: y_generative_all}%
\end{subequations}%
where $\mathbf{0}_{mn}$ is a zero-vector of length $mn$. Notice that $\mathbf{K}_x$ and $\mathbf{K}_z$ (and their Kronecker product) can be either singular or non-singular matrices, depending on the kernel choice. For the sake of simplicity, we use the pseudo-inverse notation throughout the paper. Now, the marginal likelihood of $\mathbf{y}$ is
\begin{equation}
\label{eq: y_generative_marginal}
    \mathbf{y} \sim \mathcal{N}(\mathbf{0}_{mn}, \mathbf{K}_x \otimes \mathbf{K}_z +  \sigma^2_{\epsilon}\mathbf{I}_{nm})
\end{equation}%
from which we can see that the covariance structure of $\mathbf{y}$ can be understood as a combination of $\mathbf{K}_x$ and $\mathbf{K}_z$. Specifically, in the noise-free scenario where $\sigma^2_\epsilon = 0$, the covariance matrix over the rows of $\mathbf{Y}$ is 
\begin{equation}
\label{eq:row_cov}
    \text{Cov}_r[\mathbf{Y}] = E[\mathbf{Y}^{\top}\mathbf{Y}] = \text{Tr}(\mathbf{K}_z)\mathbf{K}_x
\end{equation}%
Similarly, the covariance matrix over the columns of $\mathbf{Y}$ is 
\begin{equation}
\label{eq:column_cov}
    \text{Cov}_c[\mathbf{Y}] = E[\mathbf{Y}\mathbf{Y}^{\top}] = \text{Tr}(\mathbf{K}_x)\mathbf{K}_z
\end{equation}%
The proof for Eq.(\ref{eq:row_cov}) and Eq.(\ref{eq:column_cov}) can be found in \cite{ding2018matrix}. In Appendix \ref{sec: appendix_generative_model_interpretation}, we show that the maximisation of the log-posterior of the coefficient vector $\mathbf{a}$ leads to the objective in Eq.(\ref{eq: kronecker_regression}).

\section{A Smoothness Measure for Dependent Graph Signals}
\label{sec: new_smooothness}

From Eq.(\ref{eq: y_generative_marginal}), a noise-free version of $\mathbf{y}$ has a Kronecker product covariance structure $\mathbf{K}_x \otimes \mathbf{K}_z$. Recall that, for $i.i.d.$ Gaussian graph signals, the covariance is often modelled as the pseudo-inverse of the graph Laplacian matrix (see Eq.(\ref{eq: generative_Dong}) and \cite{Dong2016a, Lake2010a}), which is used for measuring the signal smoothness. Inspired by this observation, we define a notion of smoothness for the graph signals with two-side dependency using the Laplacian quadratic form as follows
\begin{equation}
\begin{split}
    \label{eq: smoothness_kernel}
    \mathbf{y}^{\top} \mathbf{L}_{\otimes} \mathbf{y} &= \mathbf{y}^{\top} (\mathbf{K}_x \otimes \mathbf{K}_z)^{\dagger} \mathbf{y} \\
    &= \text{vec}(\mathbf{Y})^{\top} \text{vec}(\mathbf{K}_z^{\dagger} \mathbf{Y} \mathbf{K}_x^{\dagger}) \\
    &= \text{Tr} (\mathbf{Y}^{\top} \mathbf{K}_z^{\dagger} \mathbf{Y} \mathbf{K}_x^{\dagger})
\end{split}
\end{equation}%
where $\mathbf{L}_{\otimes} = (\mathbf{K}_x \otimes \mathbf{K}_z)^{\dagger} = \mathbf{K}_x^{\dagger} \otimes \mathbf{K}_z^{\dagger}$ can be interpreted as a Laplacian-like operator with a Kronecker product structure. To see this more clearly, first, let us define $\mathcal{G}_x = \{\mathcal{V}_x, \mathcal{E}_x, \mathbf{L}_x\}$ as an undirected weighted column graph that represents the structure among column vectors, and correspondingly $\mathcal{G}_z = \{\mathcal{V}_z, \mathcal{E}_z, \mathbf{L}_z\}$ as a row graph that represents the structure among row vectors. Second, let us connect the kernel matrices $\mathbf{K}_x$ and $\mathbf{K}_z$ to the Laplacian matrices $\mathbf{L}_x$ and $\mathbf{L}_z$ by recognising that the former can be defined as functions of the latter as kernels on graphs \cite{smola2003kernels}, e.g.
\begin{equation}
    \nonumber
    \mathbf{K}_x = \mathbf{L}_x^{\dagger}, 
    \quad  \mathbf{K}_z = \mathbf{L}_z^{\dagger}. 
\end{equation}%
Therefore, we have $\mathbf{L}_{\otimes} = \mathbf{L}_x \otimes \mathbf{L}_z$, and we further show in Appendix \ref{sec: appendix_Kron_Laplacian} that $\mathbf{L}_{\otimes}$ is a Laplacian-like operator on which the notion of frequencies of $\mathbf{y}$ can be defined. \par

In practice, the observation-side dependency is often given or easy to obtain. For example, for graph signals with temporal Markovian dependency, $\mathbf{L}_z$ is often modelled as a path graph representing that the observation at time $\tau+1$ only depends on the observation at time $\tau$. By comparison, $\mathbf{L}_x$ is the primary variable of interest that is often estimated in the graph learning literature. Therefore, in this paper, we assume that $\mathbf{L}_z$ can be encoded in the observation-side information $\mathbf{z}$ via $\mathbf{K}_z$ such that $\mathbf{L}_z = \mathbf{K}_z^{\dagger}$. We simply denote $\mathbf{L}_x = \mathbf{L}$ from this section onwards and define a smoothness measure where we replace the kernel matrix $\mathbf{K}_x$ in Eq.(\ref{eq: smoothness_kernel}) with the Laplacian matrix $\mathbf{L}$
\begin{equation}
\begin{split}
    \label{eq: smoothness_multigraph}
    \mathbf{y}^{\top} \mathbf{L}_{\otimes} \mathbf{y} = \text{Tr} (\mathbf{Y}^{\top} \mathbf{K}_z^{\dagger} \mathbf{Y} \mathbf{L}).
\end{split}
\end{equation}%
This effectively disentangles the relationship among nodes (i.e. the graph to be learned) from the observation-side dependency in graph signals.

The smoothness term, in the vectorised form, can be viewed as a Laplacian regulariser which can be added to the problem of inferring a function $f$ that fits the graph signals in Eq.(\ref{eq: kronecker_regression}). Specifically, the graph signals in the smoothness term can be replaced with the estimates $\mathbf{\hat{Y}} = \mathbf{K}_z\mathbf{A}\mathbf{K}_x$ from the function $f$ such that
\begin{equation}
\begin{split}
    ||f||^2_{\mathcal{H}_{\mathcal{M}}} & = \langle f, \mathbf{L}_{\otimes} f \rangle_{\mathcal{H}_{\mathcal{M}}} \\
    & = \langle f, (\mathbf{L} \otimes \mathbf{K}_z^{\dagger}) f \rangle_{\mathcal{H}_{\mathcal{M}}} \\
    & = \text{vec}(\mathbf{\hat{Y}})^{\top}(\mathbf{L} \otimes \mathbf{K}_z^{\dagger})\text{vec}(\mathbf{\hat{Y}}) \\
    & =  \text{Tr}(\mathbf{A} \mathbf{K}_x \mathbf{L} \mathbf{K}_x \mathbf{A}^{\top} \mathbf{K}_z) \\ \label{eq: f_smoothness}
\end{split}
\end{equation}%
where $\mathcal{M}$ denotes a compact manifold\footnote{We refer the interested reader to \cite{belkin2005manifold, belkin2006manifold} for the theorem of manifold regularisation with the Laplace-Beltrami operator.}.
We will make use of the Laplacian regulariser in Eq.(\ref{eq: f_smoothness}) to derive the proposed graph learning models in the following section. \par

\section{Kernel Graph Learning}
\label{sec: methods_KGL}

\subsection{Learning Framework}
We propose a joint learning framework for inferring the function $f$ that fits the graph signals as in Eq.(\ref{eq: kronecker_regression}) 
as well as the underlying graph $\mathbf{L}$ to capture the relationship between the nodes as in Eq.(\ref{eq: f_smoothness}). This relationship is disentangled from the observation-side dependency of non-$i.i.d.$ graph signals with the notion of smoothness introduced in Section \ref{sec: new_smooothness}. We name this framework Kernel Graph Learning (\textbf{KGL}) which aims at solving the following problem:
\begin{equation}
\begin{split}
\label{eq:KGL_two_side}
    \min_{\mathbf{L} \in \mathcal{L}, \mathbf{A}} ~ & J(\mathbf{L}, \mathbf{A}) = ||\mathbf{Y} - \mathbf{K}_z \mathbf{A}  \mathbf{K}_x||_F^2  + \lambda \text{Tr}(\mathbf{K}_z \mathbf{A}  \mathbf{K}_x \mathbf{A}^{\top}) \\  & + \rho \text{Tr}(\mathbf{A} \mathbf{K}_x \mathbf{L} \mathbf{K}_x \mathbf{A}^{\top} \mathbf{K}_z) + \psi ||\mathbf{L}||^2_F \\
   \text{s.t.}  \quad & \text{Tr}(\mathbf{L}) = m
\end{split}
\end{equation}%
where $\mathcal{L} = \{\mathbf{L} | \mathbf{L}\mathbf{1} = 0,  \mathbf{L}_{jj'} = \mathbf{L}_{j'j} \leq 0, \forall j \neq j' \}$, and $||\cdot||_F$ denotes the Frobenius norm. 
The first two terms correspond to the functional learning part where the hyperparameter $\lambda > 0$ controls the complexity of the function $f$ for fitting $\mathbf{Y}$.
The last two terms and the constraints can be viewed as a graph learning model in Eq.(\ref{eq: smoothnessL}) with the fitted values of $\mathbf{Y}$ as input graph signals and the sum barrier as the graph regulariser. The hyperparameter $\rho > 0$ controls the relative importance between fitting the function and learning the graph, and $\psi > 0$ controls the distribution of edge weights. The trace constraint acts as a normalisation term such that the sum of learned edge weights equals the number of nodes. The model is compatible with constraints that enforce other properties on the learned graph, e.g. the log barrier introduced in Section \ref{sec:preliminary_GSP_GL}. This paper is mainly based on one of the choices for the constraints in order to maintain focus on the general framework. \par

\subsection{Optimisation: Alternating Minimisation}

We first recognise that Eq.(\ref{eq:KGL_two_side}) is a biconvex optimisation problem, i.e. convex w.r.t $\mathbf{A}$ while $\mathbf{L}$ is fixed and vice versa. This motivates an iterative block-coordinate descent algorithm that alternates between minimisation in $\mathbf{A}$ and $\mathbf{L}$ \cite{Tseng2001a, Gorski2007a}. In this section, we derive the update steps of $\mathbf{A}$ and $\mathbf{L}$ separately, propose the main algorithm in Algorithm \ref{alg:KGL_L}, and prove its convergence.

\subsubsection{Update of $\mathbf{A}$}
The update of coefficients $\mathbf{A}$ can be regarded as solving a Laplacian-regularised kernel regression \cite{belkin2006manifold}. Given $\mathbf{L}$, the optimisation problem of Eq.(\ref{eq:KGL_two_side}) becomes
\begin{equation}
\begin{split}
    \min_{\mathbf{A}} ~ & ||\mathbf{Y} - \mathbf{K}_z \mathbf{A}  \mathbf{K}_x||_F^2  + \lambda \text{Tr}(\mathbf{K}_z \mathbf{A}  \mathbf{K}_x \mathbf{A}^{\top}) \\ & + \rho \text{Tr}(\mathbf{A} \mathbf{K}_x \mathbf{L} \mathbf{K}_x \mathbf{A}^{\top} \mathbf{K}_z)
\end{split}
\end{equation}
and, after dropping constant terms,     
\begin{equation}
\label{eq:update_A_obj}
\begin{split}
    \min_{\mathbf{A}} ~ J_\mathbf{L}(\mathbf{A}) = &
    \text{Tr}(\mathbf{A}^{\top} \mathbf{K}_z^2 \mathbf{A}  \mathbf{K}_x^2) - 2 \text{Tr}(\mathbf{K}_x \mathbf{A}^{\top}  \mathbf{K}_z \mathbf{Y}) \\ & + \lambda \text{Tr}(\mathbf{K}_z \mathbf{A}  \mathbf{K}_x \mathbf{A}^{\top}) + \rho \text{Tr}(\mathbf{A} \mathbf{K}_x \mathbf{L} \mathbf{K}_x \mathbf{A}^{\top} \mathbf{K}_z).
\end{split}
\end{equation}%
Denote $\mathbf{a} = \text{vec}(\mathbf{A})$ and $\mathbf{y} = \text{vec}(\mathbf{Y})$, we obtain a dual-form for $J_\mathbf{L}(\mathbf{A})$ such that
\begin{equation}
\label{eq:update_A_obj_vec}
\begin{split}
     J_\mathbf{L}(\mathbf{a}) 
     = & \mathbf{a}^{\top}(\mathbf{K}_x^2 \otimes \mathbf{K}_z^2) \mathbf{a} - 2 \mathbf{a}^{\top} (\mathbf{K}_x \otimes \mathbf{K}_z)\mathbf{y} \\
     & + \lambda \mathbf{a}^{\top} (\mathbf{K}_x \otimes \mathbf{K}_z)\mathbf{a} + \rho \mathbf{a}^{\top} \big( (\mathbf{K}_x \mathbf{L} \mathbf{K}_x) \otimes \mathbf{K}_z \big)  \mathbf{a} \\
     = &  \mathbf{a}^{\top} \Big( \mathbf{K}^2 + \lambda \mathbf{K} + \rho \mathbf{S} \otimes \mathbf{K}_z \Big) \mathbf{a} - 2 \mathbf{a}^{\top} \mathbf{K} \mathbf{y}
\end{split}
\end{equation}%
where $\mathbf{K} = \mathbf{K}_x \otimes \mathbf{K}_z$, and $\mathbf{S} = \mathbf{K}_x \mathbf{L} \mathbf{K}_x$ for simplicity. We prove in Appendix \ref{sec: appendix_psd} that $\mathbf{K}^2 + \lambda \mathbf{K} + \rho \mathbf{S} \otimes \mathbf{K}_z $ is positive semi-definite thus Eq.(\ref{eq:update_A_obj_vec}) is an unconstrained quadratic programme. The gradient of $J_\mathbf{L}(\mathbf{a})$ w.r.t. $\mathbf{a}$ is 
\begin{equation}
\label{eq:vecA_graident_K_not_invertible}
    \nabla J_\mathbf{L}(\mathbf{a}) = \Big( \mathbf{K}^2 + \lambda \mathbf{K} + \rho \mathbf{S} \otimes \mathbf{K}_z \Big) \mathbf{a} - \mathbf{K} \mathbf{y}.
\end{equation}%

Strictly speaking, the matrix $\mathbf{K}^2 + \lambda \mathbf{K} + \rho \mathbf{S} \otimes \mathbf{K}_z$ may contain zero eigenvalues which makes it not invertible. However, a majority of popular kernel functions for $\mathbf{K}_x$ and $\mathbf{K}_z$ are positive-definite, e.g. the RBF kernel. Since Kronecker product preserves positive definiteness, $\mathbf{K}$ and hence the whole matrix is positive-definite and invertible. 
Setting $\nabla J_\mathbf{L}(\mathbf{a}) = \mathbf{0}$ and cancelling out $\mathbf{K}$, we have:
\begin{equation}
\label{eq: update_a_gradient_cancel_K}
\begin{split}
    & \Big( \mathbf{K}^2 + \lambda \mathbf{K} + \rho \mathbf{S} \otimes \mathbf{K}_z \Big) \mathbf{a} - \mathbf{K} \mathbf{y} = \mathbf{0} \\
    \implies & \Big( \mathbf{K}^2 + \lambda \mathbf{K} + \rho \mathbf{K}(\mathbf{L}\mathbf{K}_x \otimes \mathbf{I}_n) \Big) \mathbf{a} - \mathbf{K} \mathbf{y} = \mathbf{0} \\
    \implies & \Big( \mathbf{K} + \lambda \mathbf{I}_{mn} + \rho \mathbf{K}(\mathbf{L}\mathbf{K}_x \otimes \mathbf{I}_n) \Big) \mathbf{a} = \mathbf{y}
\end{split}
\end{equation}
Denote $\mathbf{H} = \mathbf{K} + \lambda \mathbf{I}_{mn} + \rho \mathbf{K}(\mathbf{L}\mathbf{K}_x \otimes \mathbf{I}_n)$, where $\mathbf{H}$ has a dimension of $nm \times nm$. We can therefore obtain a closed-form solution such that
\begin{equation}
    \mathbf{a} = \mathbf{H}^{-1}\mathbf{y}
\end{equation}%
where the inverse of $\mathbf{H}$ requires $\mathcal{O}(n^3m^3)$. 

To further reduce the complexity, we make use of the Kronecker structure and matrix tricks. We first recognise $\mathbf{H}$ as
\begin{equation}
\nonumber
\mathbf{H} = \left(\left(\rho \mathbf{L} +\eta \mathbf{K}_{x}^{-1}\right)\oplus \mathbf{K}_{z}\right)\left(\mathbf{K}_{x}\otimes \mathbf{I}_{n}\right)
\end{equation}%
where $\oplus$ is the Kronecker sum. With the eigendecomposition $\mathbf{K}_{x}= \mathbf{Q}_{x} \boldsymbol{\Lambda}_{x}\mathbf{Q}_{x}^{\top}$, $\mathbf{K}_{z}=\mathbf{Q}_{z}\boldsymbol{\Lambda}_{z}\mathbf{Q}_{z}^{\top}$
and $\rho \mathbf{L} +\eta \mathbf{K}_{x}^{-1}=\mathbf{U}_{x}\mathbf{D}_{x}\mathbf{U}_{x}^{\top}$, we have 
\begin{equation}
    \mathbf{H} = \left(\mathbf{U}_{x}\otimes \mathbf{Q}_{z}\right)\left(\mathbf{D}_{x}\oplus\boldsymbol{\Lambda}_{z}\right)\left(\mathbf{U}_{x}^{\top}\otimes \mathbf{Q}_{z}^{\top}\right)\left(\mathbf{Q}_{x}\boldsymbol{\Lambda}_{x}\mathbf{Q}_{x}^{\top}\otimes \mathbf{I}_{n}\right).
\end{equation}
Here, $\mathbf{D}_{x}\oplus\boldsymbol{\Lambda}_{z}$ is an $mn\times mn$ diagonal matrix with entries being all the pairwise sums of eigenvalues in $\mathbf{D}_{x}$ and $\boldsymbol{\Lambda}_{z}$. Inversion of that matrix is thus $O(mn)$. We can now obtain cheap
inversion with
\begin{equation}
\begin{split}
     \mathbf{H}^{-1}\mathbf{y} = & \left(\mathbf{Q}_{x}\boldsymbol{\Lambda}_{x}^{-1}\mathbf{Q}_{x}^{\top}\otimes \mathbf{I}_{n}\right)\left(\mathbf{U}_{x}\otimes \mathbf{Q}_{z}\right)\left(\mathbf{D}_{x}\oplus\boldsymbol{\Lambda}_{z}\right)^{-1}\\
     &\cdot\text{vec}\left(\mathbf{Q}_{z}^{\top}\mathbf{Y}\mathbf{U}_{x}\right).
\end{split}
\end{equation}
The operation $\left(\mathbf{D}_{x}\oplus\boldsymbol{\Lambda}_{z}\right)^{-1}\text{vec}\left(\mathbf{Q}_{z}^{\top}\mathbf{YU}_{x}\right)$
is simply rescaling each term in the $mn$-vector with the corresponding
diagonal entry of $\left(\mathbf{D}_{x}\oplus\boldsymbol{\Lambda}_{z}\right)^{-1}$. If
we denote $\mathbf{d}_{x}$ and $\mathbf{d}_{z}$ column vectors containing the
diagonal entries this can be expressed as $\text{vec}\left(\mathbf{B}\right)$ with
\[
\mathbf{B}=\left(\mathbf{1}_{n}\mathbf{d}_{x}^{\top}+\mathbf{d}_{z}\mathbf{1}_{m}\right)^{\circ-1}\circ\left(\mathbf{Q}_{z}^{\top}\mathbf{YU}_{x}\right),
\]
where $\circ$ denotes the Hadamard product and $(\cdot)^{\circ-1}$ denotes entrywise
inversion. Remaining operations are now direct and give the closed-form solution as
\begin{eqnarray}
\mathbf{A} & = & \mathbf{Q}_{z}\mathbf{BU}_{x}^{\top}\mathbf{Q}_{x}\boldsymbol{\Lambda}_{x}^{-1}\mathbf{Q}_{x}^{\top}\nonumber \\
 & = & \mathbf{Q}_{z}\mathbf{BU}_{x}^{\top}\mathbf{K}_{x}^{-1}\nonumber \\
 & = & \mathbf{Q}_{z}\left[\left(\mathbf{1}_{n}\mathbf{d}_{x}^{\top}+\boldsymbol{\lambda}_{z}\mathbf{1}_{m}\right)^{\circ-1}\circ\left(\mathbf{Q}_{z}^{\top}\mathbf{YU}_{x}\right)\right]\mathbf{U}_{x}^{\top}\mathbf{K}_{x}^{-1}. \nonumber \\
 \label{eq:update_a_vec_closed_kron}
\end{eqnarray}
Notice that this solution requires only matrix multiplications and inversions
and eigendecompositions on $m\times m$ or $n\times n$ matrices,
giving an overall computational cost of $\mathcal{O}\left(n^{3}+m^{3}+nm^{2}+n^{2}m\right)$.

When $\mathbf{K}_x$ and $\mathbf{K}_z$ are not invertible,  we suggest using the gradient descent to avoid the inverse of a large matrix of dimension $nm \times nm$. The update step using Eq.(\ref{eq:vecA_graident_K_not_invertible}) is:
\begin{equation}
\label{eq: KGL_update_vec_a_GD}
    \mathbf{a}^{(\tau + 1)} = \mathbf{a}^{(\tau)}  - \gamma \nabla J_\mathbf{L}(\mathbf{a}^{(\tau)})
\end{equation}%
where $\gamma > 0$ is the learning rate

\subsubsection{Update of $\mathbf{L}$}
Given $\mathbf{A}$, 
the optimisation problem of Eq.(\ref{eq:KGL_two_side}) becomes
\begin{equation}
\label{eq: KGL_update_L}
\begin{split}
    \min_{\mathbf{L} \in \mathcal{L}} ~ & J_{\mathbf{A}}(\mathbf{L})= \rho \text{Tr}(\mathbf{A} \mathbf{K}_x \mathbf{L} \mathbf{K}_x \mathbf{A}^{\top} \mathbf{K}_z) + \psi ||\mathbf{L}||^2_F \\
   \text{s.t} \quad & \text{Tr}(\mathbf{L}) = m
\end{split}
\end{equation}%
which is a constrained quadratic programme w.r.t. $\mathbf{L}$. By taking $\mathbf{P} = \mathbf{K}_z^{1/2}\mathbf{A}\mathbf{K}_x$, the problem fits in the learning framework in Eq.(\ref{eq: smoothnessL}). We use the package CVXPY \cite{diamond2016cvxpy} to solve this problem. \par

\begin{algorithm}[h!]
\caption{Kernel Graph Learning (\textbf{KGL})}
\label{alg:KGL_L}
 \begin{algorithmic}[1]
 \renewcommand{\algorithmicrequire}{\textbf{Input:}}
 \renewcommand{\algorithmicensure}{\textbf{Output:}}
 \REQUIRE Observation $\mathbf{Y}$, node-side kernel matrix $\mathbf{K}_x$, observation-side kernel matrix $\mathbf{K}_z$, hyper-parameters $\rho$, $\lambda$ and $\psi$, tolerance level $ \epsilon$.
\STATE \textbf{Initialisation: $t = 0, \mathbf{A} = \mathbf{0} \in \mathbb{R}^{n \times m}$} \par
\WHILE {$|\mathbf{L}^{(t)} - \mathbf{L}^{(t-1)}| < \epsilon$ and $|\mathbf{A}^{(t)} - \mathbf{A}^{(t-1)}| < \epsilon$}
  \STATE Update $\mathbf{L}^{(t)} = \arg\min J_{\mathbf{A}^{(t-1)}}(\mathbf{L})$ by solving Eq.(\ref{eq: KGL_update_L})
  \STATE Update $\mathbf{A}^{(t)} = \arg\min J_{\mathbf{L}^{(t)}}(\mathbf{A})$ by \par
  (i) using the closed-form solution in Eq.(\ref{eq:update_a_vec_closed_kron}), \textbf{if} $\mathbf{K}_x$ and $\mathbf{K}_z$ are invertible; or \par
  (ii) updating $\text{vec}(\mathbf{A}^{(t)})$ with gradient descent in Eq.(\ref{eq: KGL_update_vec_a_GD}), \textbf{otherwise}
  \STATE $t = t + 1$
\ENDWHILE
\RETURN $\mathbf{L}^{(t)}$, $\mathbf{A}^{(t)}$
\end{algorithmic} 
\end{algorithm}

The overall KGL framework is presented in Algorithm \ref{alg:KGL_L}.
The convergence for each update step of $\mathbf{A}^{(t)}$ and $\mathbf{L}^{(t)}$ is guaranteed in solving the respective convex optimisation of Eq.(\ref{eq:update_A_obj}) and Eq.(\ref{eq: KGL_update_L}). It should be noted that the step size $\gamma$ in Eq.(\ref{eq: KGL_update_vec_a_GD}) needs to be set appropriately for the gradient descent to converge. We suggest a $\gamma \leq 10^{-4}$ from empirical results. We now prove the following lemma. 

\begin{lemma}
The sequence $\{J(\mathbf{L}^{(t)}, \mathbf{A}^{(t)})\}$ generated by Algorithm \ref{alg:KGL_L} converges monotonically and the solution obtained by Algorithm \ref{alg:KGL_L} is a stationary point of  Eq.(\ref{eq:KGL_two_side}).
\end{lemma}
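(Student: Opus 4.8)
The plan is to establish the lemma in three stages: monotone decrease of the objective, boundedness of the iterates, and subsequential convergence to a stationary point via the theory of alternate convex search for biconvex problems \cite{Gorski2007a, Tseng2001a}.

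First I would show that $\{J(\mathbf{L}^{(t)}, \mathbf{A}^{(t)})\}$ is monotonically non-increasing and bounded below. Monotonicity follows directly from the exactness of the two block updates: since $\mathbf{L}^{(t)}$ minimises $J(\cdot, \mathbf{A}^{(t-1)})$ over $\mathcal{L}$ (the convex programme Eq.~(\ref{eq: KGL_update_L}) solved exactly by CVXPY), we have $J(\mathbf{L}^{(t)}, \mathbf{A}^{(t-1)}) \leq J(\mathbf{L}^{(t-1)}, \mathbf{A}^{(t-1)})$, and since $\mathbf{A}^{(t)}$ minimises $J(\mathbf{L}^{(t)}, \cdot)$ (the unconstrained convex quadratic with closed form Eq.~(\ref{eq:update_a_vec_closed_kron}), or gradient descent driven to optimality), we have $J(\mathbf{L}^{(t)}, \mathbf{A}^{(t)}) \leq J(\mathbf{L}^{(t)}, \mathbf{A}^{(t-1)})$. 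Chaining gives $J(\mathbf{L}^{(t)}, \mathbf{A}^{(t)}) \leq J(\mathbf{L}^{(t-1)}, \mathbf{A}^{(t-1)})$. For the lower bound, every term of $J$ is nonnegative: the fitting term is a Frobenius norm; the Tikhonov term equals $\|\mathbf{K}_z^{1/2}\mathbf{A}\mathbf{K}_x^{1/2}\|_F^2 \geq 0$ since $\mathbf{K}_x, \mathbf{K}_z \succeq 0$; the smoothness term equals $\text{Tr}(\mathbf{P}\mathbf{L}\mathbf{P}^{\top}) \geq 0$ with $\mathbf{P} = \mathbf{K}_z^{1/2}\mathbf{A}\mathbf{K}_x$ and $\mathbf{L} \succeq 0$ for any valid Laplacian; and $\psi\|\mathbf{L}\|_F^2 \geq 0$. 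A monotone sequence bounded below converges, which proves the first claim.

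Next I would argue the iterates are bounded so that accumulation points exist. The feasible set for $\mathbf{L}$ is compact: the constraints $\mathbf{L}\in\mathcal{L}$ together with $\text{Tr}(\mathbf{L}) = m$ force $\mathbf{L} \succeq 0$ with fixed trace, which bounds all entries. For $\mathbf{A}$, when $\mathbf{K}_x$ and $\mathbf{K}_z$ are positive definite the map $\mathbf{A} \mapsto \|\mathbf{Y} - \mathbf{K}_z\mathbf{A}\mathbf{K}_x\|_F^2 + \lambda\,\text{Tr}(\mathbf{K}_z\mathbf{A}\mathbf{K}_x\mathbf{A}^{\top})$ is coercive, so the sublevel set $\{(\mathbf{L},\mathbf{A}) : J(\mathbf{L},\mathbf{A}) \leq J(\mathbf{L}^{(0)},\mathbf{A}^{(0)})\}$ is bounded in $\mathbf{A}$ uniformly over the compact $\mathbf{L}$-domain. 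Since the objective is non-increasing, every iterate stays in this sublevel set, so the whole sequence $\{(\mathbf{L}^{(t)}, \mathbf{A}^{(t)})\}$ is bounded and admits a convergent subsequence with limit $(\mathbf{L}^{\star}, \mathbf{A}^{\star})$, feasible by closedness of $\mathcal{L}$.

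Finally, for stationarity I would invoke the convergence theory for alternate convex search on biconvex problems \cite{Gorski2007a}. Because each block is updated to exact optimality of a convex subproblem and $J$ is continuously differentiable and biconvex (as noted below Eq.~(\ref{eq:KGL_two_side})), every accumulation point is a partial optimum, i.e. $\mathbf{A}^{\star} \in \arg\min_{\mathbf{A}} J(\mathbf{L}^{\star}, \mathbf{A})$ and $\mathbf{L}^{\star} \in \arg\min_{\mathbf{L}\in\mathcal{L}} J(\mathbf{L}, \mathbf{A}^{\star})$; this is obtained by passing to the limit in the per-block optimality conditions using continuity of $J$ and $\nabla J$. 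A partial optimum of a differentiable problem satisfies the joint first-order conditions, namely $\nabla_{\mathbf{A}} J(\mathbf{L}^{\star}, \mathbf{A}^{\star}) = \mathbf{0}$ and the variational inequality for $\mathbf{L}^{\star}$ over the convex set $\mathcal{L}\cap\{\text{Tr}(\mathbf{L})=m\}$, so $(\mathbf{L}^{\star}, \mathbf{A}^{\star})$ is a stationary point of Eq.~(\ref{eq:KGL_two_side}). I expect the main obstacle to be precisely this last step: since the joint objective is nonconvex, block coordinate descent need not converge to a single point or to a global minimum, so care is needed to secure accumulation points via the coercivity/compactness argument and to ensure that the Gorski--Tseng hypotheses genuinely upgrade coordinatewise optimality to joint stationarity.
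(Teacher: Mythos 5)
Your proof is correct and follows essentially the same route as the paper's: both treat Eq.(\ref{eq:KGL_two_side}) as a biconvex problem attacked by exact alternate convex search, and both rest on the Gorski--Tseng block-coordinate-descent theory \cite{Gorski2007a,Tseng2001a} for the three ingredients of monotone convergence of $\{J(\mathbf{L}^{(t)},\mathbf{A}^{(t)})\}$, boundedness of the iterates, and the upgrade of coordinatewise optimality of cluster points to stationarity. The only difference is self-containment: where the paper cites Theorem 4.5 of \cite{Gorski2007a} and Theorems 4.1 and 5.1 of \cite{Tseng2001a}, you prove monotonicity and boundedness directly (nonnegativity of every term of $J$, compactness of the trace-constrained Laplacian set, coercivity of the $\mathbf{A}$-subproblem), which in fact makes explicit a hypothesis the paper leaves implicit in its citations --- namely that the coercivity guaranteeing bounded $\mathbf{A}$-iterates requires $\mathbf{K}_x, \mathbf{K}_z \succ 0$, consistent with the paper's own invertibility caveat for the closed-form update.
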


\begin{proof}
We follow the convergence results of the alternate convex search in \cite{Gorski2007a} and that of a more general cyclic block-coordinate descent algorithm in \cite{Tseng2001a}. By recognising Eq.(\ref{eq:update_A_obj}) and Eq.(\ref{eq: KGL_update_L}) are quadratic programmes (with Lemma \ref{lemma: optiA_matrix_psd} in Appendix \ref{sec: appendix_psd}), the problem of Eq.(\ref{eq:KGL_two_side}) is a bi-convex problem with all the terms differentiable and the function $J(\mathbf{L}, \mathbf{A})$ continuous and bounded from below. Theorem 4.5 in \cite{Gorski2007a} states that the sequence $\{J(\mathbf{L}^{(t)}, \mathbf{A}^{(t)})\}$ generated by Algorithm \ref{alg:KGL_L} converges monotonically. Theorem 4.1 in \cite{Tseng2001a} states that the sequence $\{\mathbf{L}^{(t)}\}$ and $\{\mathbf{A}^{(t)}\}$ generated by Algorithm \ref{alg:KGL_L} are defined and bounded. Furthermore, according to Theorem 5.1 in \cite{Tseng2001a}, every cluster point $\{\mathbf{L}^{(t)}, \mathbf{A}^{(t)} \}$ is a coordinatewise minimum point of $J$ hence the solution is a stationary point of Eq.(\ref{eq:KGL_two_side}).

\end{proof}

Our empirical results suggest that after only 10 iterations or less, the sequence $\{\mathbf{L}^{(t)}, \mathbf{A}^{(t)} \}$ does not change more than the tolerance level. The computational complexity of \textbf{KGL} in Algorithm \ref{alg:KGL_L} is dominated by the step of updating $\mathbf{A}$. It requires $\mathcal{O}(n^3+m^3+nm^2+n^2m)$ to compute the closed-form solution of $\mathbf{A}$ or $\mathcal{O}(n^3m^3)$ to compute the gradient in Eq.(\ref{eq:vecA_graident_K_not_invertible}) if the closed-form solution is not applied when $\mathbf{K}_z$
and $\mathbf{K}_x$ not invertible. Updating $\mathbf{L}$ requires $\mathcal{O}(m^2)$. Overall, for $T$ iterations that guarantee the convergence of Algorithm \ref{alg:KGL_L}, it requires either $\mathcal{O}(T(n^3+m^3+nm^2+n^2m))$ or $\mathcal{O}(T(n^3m^3))$ operations, depending on whether $\mathbf{K}_z$
and $\mathbf{K}_x$ are chosen to be invertible. We note that one could readily appeal to large-scale kernel approximation methods for further reduction of computational and storage complexity of the \textbf{KGL} framework, and hence broaden its applicability to larger datasets. There are two main approaches to large-scale kernel approximations and both can be applied to \textbf{KGL}. The former focuses on kernel matrix approximations using methods such as Nystr\"om sampling \cite{kumar2012sampling}, while the latter deals with the approximation of the kernel function itself, using methods such as Random Fourier Features \cite{francis2020major}. Nonetheless, this paper focuses on the modelling perspective, and we will leave the algorithmic improvement as a future direction.


\subsection{Special Cases of Kernel Graph Learning}

\paragraph{Independent observations} It is often assumed that graph signals are $i.i.d.$ hence there exists no dependency along the observation side. This is equivalent to setting $\mathbf{K}_z = \mathbf{I}_n$ in our framework. We refer to this special case of \textbf{KGL} as Node-side Kernel Graph Learning (\textbf{KGL-N}):
\begin{equation}
\label{eq:KGL_one_side_node}
\begin{split}
    \min_{\mathbf{L} \in \mathcal{L}, \mathbf{A}} ~ & ||\mathbf{Y} - \mathbf{A}  \mathbf{K}_x||_F^2  + \lambda \text{Tr}(\mathbf{A}  \mathbf{K}_x \mathbf{A}^{\top}) \\ & + \rho \text{Tr}(\mathbf{A} \mathbf{K}_x \mathbf{L} \mathbf{K}_x \mathbf{A}^{\top}) + \psi ||\mathbf{L}||^2_F \\
   \text{s.t} \quad & \text{Tr}(\mathbf{L}) = m
\end{split}
\end{equation}%

\paragraph{No node-side information} It also may be the case that no node-side information is available for the problem at hand. In this case, we can simply set $\mathbf{K}_x = \mathbf{I}_m$ in \textbf{KGL}, leading to to Observation-side Kernel Graph Learning (\textbf{KGL-O}):
\begin{equation}
\label{eq:KGL_one_side_obs}
\begin{split}
    \min_{\mathbf{L} \in \mathcal{L}, \mathbf{A}} ~ & ||\mathbf{Y} - \mathbf{K}_z \mathbf{A}||_F^2  + \lambda \text{Tr}(\mathbf{A}^{\top} \mathbf{K}_z \mathbf{A}) \\ & + \rho  \text{Tr}(\mathbf{A} \mathbf{L} \mathbf{A}^{\top} \mathbf{K}_z) + \psi ||\mathbf{L}||^2_F \\
   \text{s.t} \quad & \text{Tr}(\mathbf{L}) = m
\end{split}
\end{equation}%

In the cases of one-side \textbf{KGL}, the optimisation again follows the alternating minimisation, i.e. to solve for \textbf{KGL-N} or \textbf{KGL-O}), one can simply set $\mathbf{K}_z = \mathbf{I}_n$ or $\mathbf{K}_x = \mathbf{I}_m$ in Algorithm \ref{alg:KGL_L}. It should be noted, however, that the update step of $\mathbf{A}$ requires less computational cost when either $\mathbf{K}_z = \mathbf{I}_n$ or $\mathbf{K}_x = \mathbf{I}_m$. Indeed, the objective function of $\textbf{KGL-N}$ can be decomposed into the sum according to $n$ functions
such that
\begin{equation}
\begin{split}
    J_{\mathbf{L}}(\{\mathbf{a}_i\}_{i=1}^n) =  \sum_{i=1}^n  \Big( ||\mathbf{y}_i - \mathbf{K}_x \mathbf{a}_i||^2_2 + \mathbf{a}_i^{\top} (\lambda \mathbf{K}_x +  \rho  \mathbf{S}) \mathbf{a}_i \Big)
\end{split}    
\end{equation}%
where $\mathbf{A} = [\mathbf{a}_1, \mathbf{a}_2, \dots, \mathbf{a}_n]^{\top}$ and $\mathbf{S} = \mathbf{K}_x \mathbf{L} \mathbf{K}_x$. Consequently, the update step can be parallelised.
\subsection{Learning with Missing Observations}
\label{sec: missing}
By modifying the least-squares loss in \textbf{KGL}, we propose an extension to jointly learn the underlying graph and function from graph-structured data with missing values. 
We encode the positions of missing values with a mask matrix $\mathbf{M} \in \mathbb{R}^{n \times m}$ such that $\mathbf{M}_{ij} = 0$ if $\mathbf{Y}_{ij}$ is missing, and $\mathbf{M}_{ij} = 1$ otherwise. Now, we only need to minimise the least-squares loss over observed $\mathbf{Y}_{ij}$ in the functional learning part, which leads to the formulation:
\begin{equation}
\label{eq:KGL_two_side_missing_obj}
\begin{split}
    \min_{\mathbf{L} \in \mathcal{L}, \mathbf{A}} ~ & ||\mathbf{M} \circ(\mathbf{Y} - \mathbf{K}_z \mathbf{A}  \mathbf{K}_x)||_F^2  + \lambda \text{Tr}(\mathbf{K}_{\mathcal{Z}} \mathbf{A}  \mathbf{K}_x \mathbf{A}^{\top}) \\ & + \rho \text{Tr}(\mathbf{A} \mathbf{K}_x \mathbf{L} \mathbf{K}_x \mathbf{A}^{\top} \mathbf{K}_z) + \psi ||\mathbf{L}||^2_F \\
   \text{s.t} \quad & \text{Tr}(\mathbf{L}) = m.
\end{split}
\end{equation}%
This formulation also applies to one-side kernel graph learning, i.e. \textbf{KGL-N} or \textbf{KGL-O}, with $\mathbf{K}_z = \mathbf{I}_n$ or $\mathbf{K}_x = \mathbf{I}_m$.

The optimisation problem in Eq.(\ref{eq:KGL_two_side_missing_obj}) is a bi-convex problem and alternating minimisation can still be applied. The update step of $\mathbf{L}$ remains the same as in Eq.(\ref{eq: KGL_update_L}), but the gradient in the update step of $\mathbf{a} = \text{vec}(\mathbf{A})$ (Step 4. in Algorithm \ref{alg:KGL_L}) becomes 
\begin{equation}
\label{eq: SKGL_gradient}
    \nabla J_{\mathbf{L}}(\mathbf{a}) = \Big( \mathbf{K}  \text{diag}(\mathbf{m})  \mathbf{K} + \lambda \mathbf{K} +  \rho \mathbf{S} \otimes \mathbf{K}_z \Big)\mathbf{a} - \mathbf{K} \text{vec}(\mathbf{M} \circ \mathbf{Y}) 
\end{equation}%
where $\mathbf{m} = \text{vec}(\mathbf{M})$. The detailed derivation of the gradient is provided in Appendix \ref{sec: appendix_missing_optimisation}. We further assume $\mathbf{K}$ is invertible, which is a mild assumption as we have many choices of kernel functions for $\mathbf{K}_x$ and $\mathbf{K}_z$ to be invertible. Also noting $\mathbf{S} \otimes \mathbf{K}_z = \mathbf{K}(\mathbf{LK}_x \otimes \mathbf{I}_n)$, the gradient becomes 
\begin{equation}
\label{eq: SKGL_gradient_noK}
    \nabla J_{\mathbf{L}}(\mathbf{a}) = \Big( \text{diag}(\mathbf{m})  \mathbf{K} + \lambda \mathbf{I}_{nm} +  \rho (\mathbf{LK}_x \otimes \mathbf{I}_n) \Big)\mathbf{a} -  \text{vec}(\mathbf{M} \circ \mathbf{Y}). 
\end{equation}%
One can either derive a close-form solution or use gradient descent based on Eq.(\ref{eq: SKGL_gradient_noK}). 

\section{Synthetic Experiments}
\label{sec: syn_exp}

\subsection{General Settings}
\label{sec: syn_exp_general_settings}
\paragraph{Groundtruth Graphs} Random graphs of $m$ nodes are drawn from the Erdös-Rényi (ER), Barabási-Albert (BA) and stochastic block model (SBM) as groundtruth, which are denoted as $\mathcal{G}_{\text{ER}}$, $\mathcal{G}_{\text{BA}}$ and $\mathcal{G}_{\text{SBM}}$, respectively. The parameters of each network model are chosen to yield an edge density of 0.3. The edge weights are randomly drawn from a uniform distribution $\mathbf{W}_{ij} \sim \mathcal{U}(0,1)$. The weighted adjacency matrix is set as $\mathbf{W} = (\mathbf{W} + \mathbf{W}^{\top})/2$ for symmetry and normalised such that the sum of edge weights is equal to $m$ for ease in comparison. The graph Laplacian $\mathbf{L}$ is calculated from $\mathbf{L} = \text{diag}(\mathbf{W1}) - \mathbf{W}$. \par

\paragraph{Groundtruth Data} We generate mild noisy data $\mathbf{Y} \in \mathbb{R}^{n \times m}$ from $\mathbf{Y} = \mathbf{K}_z\mathbf{A} \mathbf{K}_x + \mathbf{E}$, where $\mathbf{a} = \text{vec}(\mathbf{A})$ is drawn from $\mathbf{a} \sim \mathcal{N}(\mathbf{0}_{mn}, \mathbf{K}_{x}^{\dagger} \otimes \mathbf{K}_{z}^{\dagger})$ according to Eq.(\ref{eq: y_generative_a_prior}). Every entry of the noise matrix $\mathbf{E}$ is an $i.i.d.$ sample from $\mathbf{E}_{ij} \sim \mathcal{N}(0, \sigma_\epsilon^2)$. To test the proposed model against different levels of noises, we vary the value of $\sigma_\epsilon^2$ in Section \ref{sec:syn_exp_noisy}. For all other synthetic experiments, we add a mild-level noise with $\sigma_\epsilon^2 = 0.01$. We choose $\mathbf{K}_{x} = (\mathbf{I} + \alpha \mathbf{L})^{-1}$, as it is a popular method to generate smooth signals in related work \cite{kalofolias2016learn, venkitaraman2019predicting}. We consider both dependence and independence along the observation side:
\begin{itemize}
    \item \textbf{Independent Data}: $\mathbf{K}_{z} = \mathbf{I}_n$;
    \item \textbf{Dependent Data}: $\mathbf{K}_{z}$ is obtained from an RBF kernel evaluated on synthetic observation-side information $\mathbf{z} = [0, 1, 2, \dots, n-1]^{\top}$, which can be interpreted as the time-stamps of a discrete-time Markov chain. The bandwidth parameter is chosen according to the median heuristic \cite{garreau2017large}. 
\end{itemize}

\paragraph{Model Candidates} To have a fair comparison, the models are divided into two groups. The first group contains the baseline models that cannot deal with observation-side dependence:
\begin{itemize}
    \item \textbf{GL} (Eq.(14) in \cite{kalofolias2016learn}): the GSP graph learning model in Eq.(\ref{eq: smoothnessL}) with $\Omega(\mathbf{L}) = ||\mathbf{L}||^2_F$.
    \item \textbf{GL-2step} (Eq.(16) in \cite{Dong2016a}): a two-step GSP graph learning framework with an identity mapping as denoising function. From a modelling perspective, Eq.(13) in \cite{berger2020efficient} proposed a similar model with more constraints on edge weights and a different optimisation algorithm. We treat them as the same kind of techniques.
    \item \textbf{KGL-N} (proposed model in Eq. (\ref{eq:KGL_one_side_node})): $\mathbf{K}_z = \mathbf{I}_n$ in \textbf{KGL}.
\end{itemize}%
The second group is examined with observation-side dependent data: 
\begin{itemize}
    \item \textbf{KGL-Agnostic} (Eq.(18) in \cite{venkitaraman2019predicting}):  As discussed in Section \ref{sec: related_work}, the joint learning model in \cite{venkitaraman2019predicting} considered the observation-side kernel, but did not use the observation-side dependence in learning the graph. We denote their model as \textbf{KGL-Agnostic} with our notations: 
    \begin{equation}
    \label{eq:baseline_KGL_ObsDep}
    \begin{split}
        \min_{\mathbf{L} \in \mathcal{L}, \mathbf{A}} ~ & ||\mathbf{Y} - \mathbf{K}_z \mathbf{A}||_F^2  + \lambda \text{Tr}(\mathbf{A}^{\top} \mathbf{K}_z \mathbf{A}) \\ & + \rho  \text{Tr}(\mathbf{K}_{z} \mathbf{A} \mathbf{L} \mathbf{A}^{\top} \mathbf{K}_z) + \psi ||\mathbf{L}||^2_F
    \end{split}
    \end{equation}%

    \item \textbf{KGL} (proposed model in Eq. (\ref{eq:KGL_two_side})): the main learning framework.
    \item \textbf{KGL-O} (proposed model in Eq.(\ref{eq:KGL_one_side_obs})): To have a fair comparison to \textbf{\textbf{KGL-Agnostic}}, we also assume the graph is agnostic to the model (i.e. $\mathbf{K}_{x} = \mathbf{I}_m$ as model input).
\end{itemize}%
For each method, we determine the hyperparameters via a grid search, and report the highest performance achieved by the best set of hyperparameters.

\paragraph{Evaluation Metrics} Average precision score (APS) and normalised sum of squared errors ($\text{SSE}_\mathcal{G}$) are used to evaluate the graph estimates, and out-of-sample mean squared error ($\text{MSE}_y$) is used to evaluate the estimated entries of graph-structured data matrix that were not observed (or missing). 
The APS is defined in a binary classification scenario for graph structure recovery, which automatically varies the threshold of weights above which the edges are declared as learned edges. An APS score of 1 indicates that the algorithm can precisely detect the ground-truth edges and non-edges. The $\text{SSE}_\mathcal{G}$ is defined over learned adjacency matrix $\hat{\mathbf{W}}$ and the groundtruth adjacency matrix $\mathbf{W}_0$ as 
\begin{equation}
\nonumber
\text{SSE}_\mathcal{G} = \frac{||\hat{\mathbf{W}} - \mathbf{W}_0||^2_F}{||\mathbf{W}_0||^2_F}.
\end{equation}%
The out-of-sample $\text{MSE}_y$ of data matrix is defined with a mask matrix $\mathbf{M}$ (same as in Eq.(\ref{eq:KGL_two_side_missing_obj})), where $\mathbf{M}_{ij} = 0$ indicates $\mathbf{Y}_{ij}$ is a missing entry:
\begin{equation}
\nonumber
\text{Out-of-sample MSE}_y = \frac{||(\mathbf{1} \mathbf{1}^{\top}- \mathbf{M}) \circ (\hat{\mathbf{Y}} - \mathbf{Y})||^2_F}{|| \mathbf{1}\mathbf{1}^{\top}-\mathbf{M}||^2_F}
\end{equation}%
where $\hat{\mathbf{Y}} = \mathbf{K}_{z} \mathbf{A} \mathbf{K}_{x}$ and $\mathbf{A}$ is obtained from model estimates. Similarly, we are interested in the training $\text{MSE}_y$ for analysing overfitting: 
\begin{equation}
\nonumber
\text{Training MSE}_y = \frac{||\mathbf{M} \circ (\hat{\mathbf{Y}} - \mathbf{Y})||^2_F}{|| \mathbf{M}||^2_F}.
\end{equation}%

\subsection{Learning a Graph from Noisy Data}
\label{sec:syn_exp_noisy}

In order to evaluate the performance of the proposed model in learning a graph from noisy data, we add noise to the groundtruth data such that $\mathbf{Y} = \mathbf{K}_z\mathbf{A}\mathbf{K}_x + \mathbf{E}$, where every entry of the noise matrix $\mathbf{E}$ is an $i.i.d.$ sample from  $\mathbf{E}_{ij} \sim \mathcal{N}(0, \sigma_\epsilon^2)$. We vary the noise level $\sigma_\epsilon^2$ from 0 to 2 against which we plot the evaluation metrics in Figure \ref{fig:exp_noisy}. Under the same settings of the noise level, random graph and model candidate, we repeat the experiment for 10 times and report the mean (the solid curves) as well as the 5th and 95th percentile (the error bars) of the evaluation metrics. \par
  
From Figure \ref{fig:exp_noisy}, Figure \ref{fig:exp_noisy_ER_appendix} and Figure \ref{fig:exp_noisy_BA_appendix} (the latter two in Appendix \ref{sec: appendix_figures_noisy}), the proposed models outperform the baseline models in terms of all evaluation metrics. Specifically, for $\mathcal{G}_{\text{SBM}}$, when the data are independent, the performance of \textbf{KGL-N} drops slowly as the noise level increases, while that of \textbf{GL} and \textbf{GL-2step} drops quickly when noise level goes above 0.5. It is worth mentioning that the curves of two almost overlap in terms of APS. This indicates the identity mapping in \textbf{GL-2step} as denoising function does not help much in recovering graph structure, although it yields slightly smaller SSE$_\mathcal{G}$ than \textbf{GL} with the noise level greater then 0.75. \par

\begin{figure}
\centering
\begin{subfigure}[t]{0.25\textwidth}
    \centering
    \includegraphics[width = 1\linewidth]{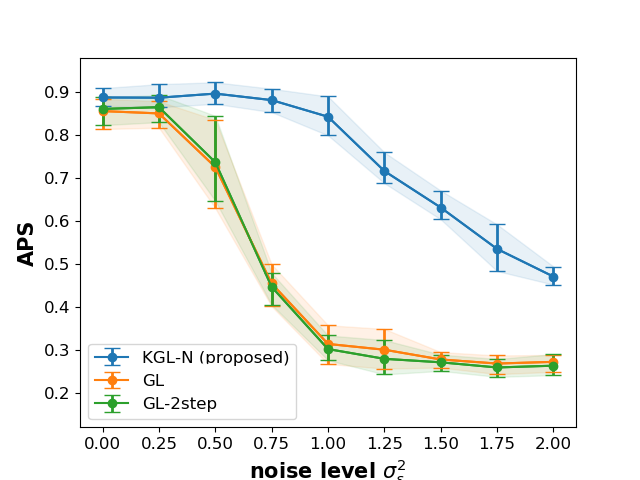}
    \subcaption{$\mathcal{G}_{\text{SBM}}$, independent data}
    \label{fig:exp_noisy_SBM_APS}
\end{subfigure}%
\begin{subfigure}[t]{0.25\textwidth}
    \centering
    \includegraphics[width = 1\linewidth]{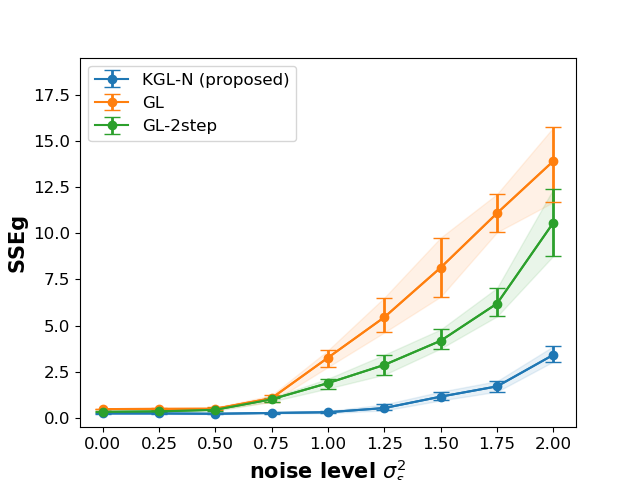}
     \subcaption{$\mathcal{G}_{\text{SBM}}$, independent data}
    \label{fig:exp_noisy_SBM_SSEg}
\end{subfigure}
\begin{subfigure}[t]{0.25\textwidth}
    \centering
    \includegraphics[width = 1\linewidth]{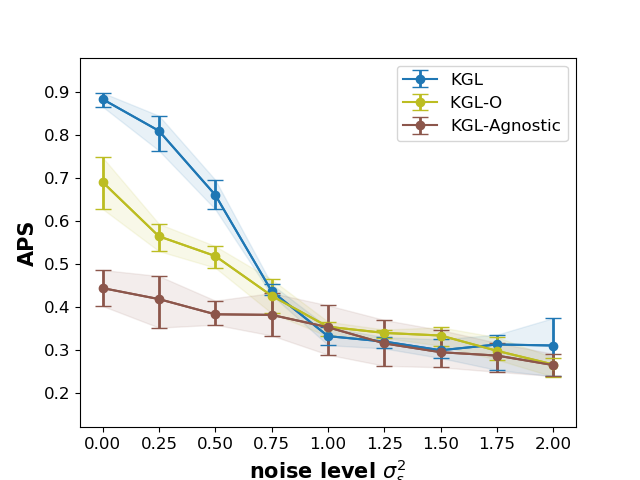}
    \subcaption{$\mathcal{G}_{\text{SBM}}$, dependent data}
    \label{fig:exp_noisy_SBM_APS_2side}
\end{subfigure}%
\begin{subfigure}[t]{0.25\textwidth}
    \centering
    \includegraphics[width = 1\linewidth]{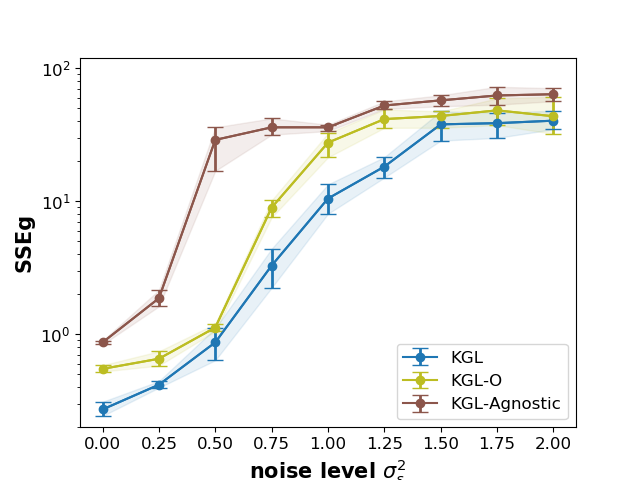}
     \subcaption{$\mathcal{G}_{\text{SBM}}$, dependent data}
    \label{fig:exp_noisy_SBM_SSEg_2side}
\end{subfigure}
\caption{The performance of recovering groundtruth graphs $\mathcal{G}_{\text{SBM}}$ in terms of APS and SSE$_\mathcal{G}$ from independent data (1st row) and dependent data (2nd row) with different noise levels.} 
\label{fig:exp_noisy}
\end{figure}

For the dependent data, the proposed model $\textbf{KGL}$ achieves a high performance when the noise level is less than 0.75. Even without the node-side information $\mathbf{K}_x$ as model input in \textbf{KGL} (note that the groundtruth data are generated in a consistent way in \cite{venkitaraman2019predicting} proposing \textbf{KGL-agnostic}), the proposed method ($\textbf{KGL-O}$) can still learn a meaningful graph with slightly worse performance compared to $\textbf{KGL}$. By contrast, \textbf{KGL-agnostic} cannot recover the groundtruth graph to a satisfying level even with little noise ($\sigma_\epsilon^2 = 0)$, as its smoothness term does not capture the dependence structure on the observation side. \par

From Figure \ref{fig:exp_noisy_ER_appendix} and Figure \ref{fig:exp_noisy_BA_appendix}, we see that $\mathcal{G}_{\text{BA}}$ is slightly more difficult to recover from data, but an improvement can nevertheless be seen in the proposed models from the baselines when the noise level is low. \par

\subsection{Learning a Graph from Missing Data}
\label{sec: syn_exp_missing_graph}

\begin{figure}
\centering
 \begin{subfigure}[t]{0.25\textwidth}
    \centering
    \includegraphics[width = 1\linewidth]{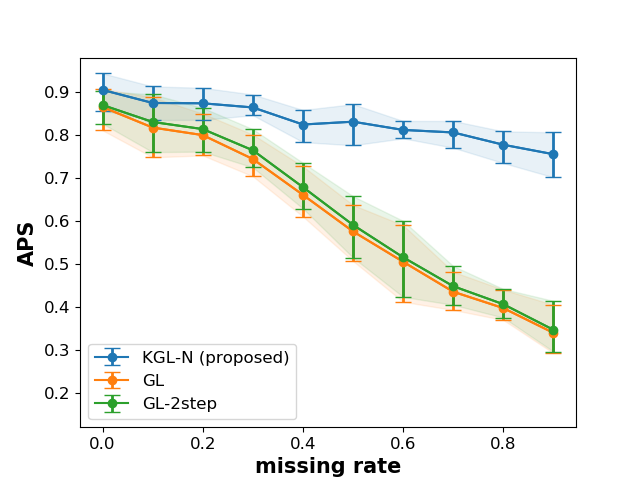}
    \subcaption{$\mathcal{G}_{\text{SBM}}$, independent data}
    \label{fig:exp_missing_SBM_APS}
\end{subfigure}%
\begin{subfigure}[t]{0.25\textwidth}
    \centering
    \includegraphics[width = 1\linewidth]{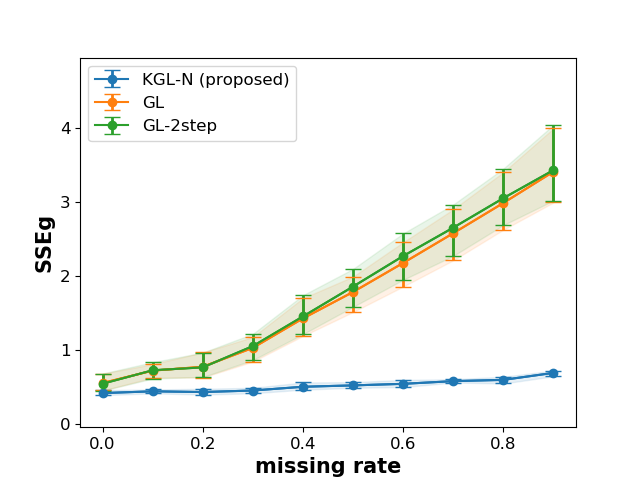}
     \subcaption{$\mathcal{G}_{\text{SBM}}$, independent data}
    \label{fig:exp_missing_SBM_SSEg}
\end{subfigure}
\begin{subfigure}[t]{0.25\textwidth}
    \centering
    \includegraphics[width = 1\linewidth]{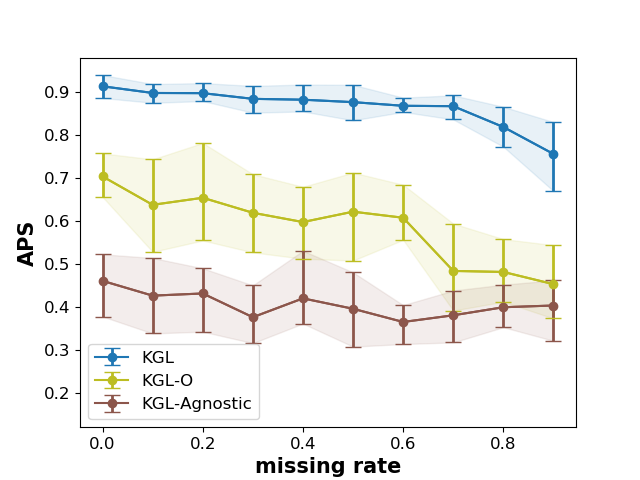}
    \subcaption{$\mathcal{G}_{\text{SBM}}$, dependent data}
    \label{fig:exp_missing_SBM_APS_2side}
\end{subfigure}%
\begin{subfigure}[t]{0.25\textwidth}
    \centering
    \includegraphics[width = 1\linewidth]{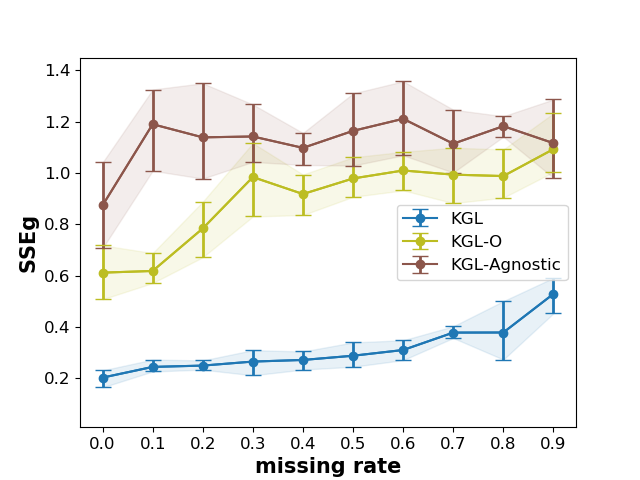}
     \subcaption{$\mathcal{G}_{\text{SBM}}$, dependent data}
    \label{fig:exp_missing_SBM_SSEg_2side}
\end{subfigure}%
\caption{The performance of recovering groundtruth graphs $\mathcal{G}_{\text{SBM}}$ in terms of APS and SSE$_\mathcal{G}$ from independent data (1st row) and dependent data (2nd row) with different rates of missing values in $\mathbf{Y}$.}  
\label{fig: syn_exp_missing}
\end{figure}

To examine the performance of learning a graph from incomplete data with $\mathbf{KGL}$ described in Section \ref{sec: missing}, we generate the mask matrix $\mathbf{M}$ indicating missing entries, where $\mathbf{M}_{ij} \overset{i.i.d.}{\sim} Bernoulli(1-r)$ and $r$ is the missing rate, i.e. $\mathbf{M}_{ij}$ has a probability of $r$ to be missing and has a value of 0. The preprocessed data with 0 replacing missing entries is $\mathbf{Y}_{m} = \mathbf{Y} \circ \mathbf{M}$, which is a natural choice in practice for the model candidates that cannot directly deal with missing entries, as the mean value of the entries in $\mathbf{Y}$ is 0 by design. We use $\mathbf{Y}_{m}$ as the input in the baseline models \textbf{GL} and \textbf{GL-2Step}. For \textbf{KGL-Agnostic} in Eq.(\ref{eq:baseline_KGL_ObsDep}), we also add a mask matrix $\mathbf{M}$ in the least-squares loss term to have a fair comparison. \par

We vary $r$ from 0 to 0.9, against which we plot the evaluation metrics, APS and SSE$_\mathcal{G}$, in Figure \ref{fig: syn_exp_missing}. The plots for $\mathcal{G}_{\text{ER}}$ and $\mathcal{G}_{\text{BA}}$ are in Appendix \ref{sec: appendix_figures_missing}. Similar to the noisy scenario, we repeat the experiments 10 times under the same settings of missing rate, random graph and model candidate and report the mean (the solid curves) as well as the 5th and 95th percentile (the error bars) of the evaluation metrics. \par

For $\mathcal{G}_{\text{SBM}}$, the proposed methods \textbf{KGL} and \textbf{KGL-N} can recover the groundtruth graphs reasonably well even when there are 80\% missing entries. For the independent data scenario, the performance of the baseline models with the preprocessed data $\mathbf{Y}_m$ drops steeply as the missing rate increases. Although it can still recover the groundtruth graphs with a high APS and low SSE$_\mathcal{G}$ when the missing rate is less than 20\%, the performance is not as good as \textbf{KGL-N}. By contrast, for \textbf{KGL-N}, the APS only drops by 0.1 from no missing entries to around 90\% missing entries, while SSE$_\mathcal{G}$ only increases by around 0.05. \par

For the dependent data scenario, all three model candidates can deal with missing data directly by adding a mask matrix in the least-squares loss term. Consequently, their curves are relatively stable as the missing rate increases from 0 to 80\%. However, the accuracy levels at which each of the model stabilises are different. The proposed method $\textbf{KGL}$, aware of both node-side and observation-side information, achieves the highest APS and lowest SSE$_{\mathcal{G}}$. By contrast, $\textbf{KGL-O}$, without access to the node-side information, can still recover a meaningful graph but with less correct edges and less accurate edge weights when the missing rate is less than 0.6. The performance of $\textbf{KGL-Agnostic}$ is not as good as $\textbf{KGL-O}$ because the smoothness term in Eq.(\ref{eq:baseline_KGL_ObsDep}) is not able to disentangle the influence of the observation-side dependency from the graph structure. \par

The performance of \textbf{KGL} does not differ much for different random graph models. Still, there is an improvement in recovering $\mathcal{G}_{\text{BA}}$ compared to the baseline graph learning models, as can be seen in Figure \ref{fig:exp_missing_BA_APS_2side} and Figure \ref{fig:exp_missing_BA_SSEg_2side} in Appendix \ref{sec: appendix_figures_missing}. \par

\subsection{Graph-Structured Matrix Completion}


\begin{figure}[t]
\centering
\begin{subfigure}[t]{0.25\textwidth}
    \centering
    \includegraphics[width = 1\linewidth]{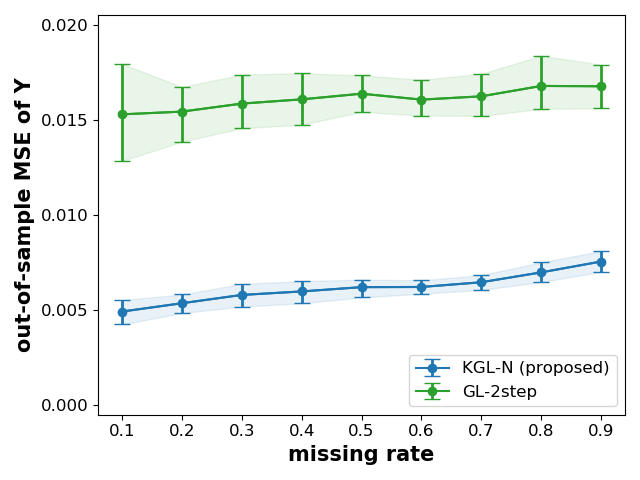}
    \subcaption{independent data}
    \label{fig:exp_missing_Y_indep}
\end{subfigure}%
\begin{subfigure}[t]{0.25\textwidth}
    \centering
    \includegraphics[width = 1\linewidth]{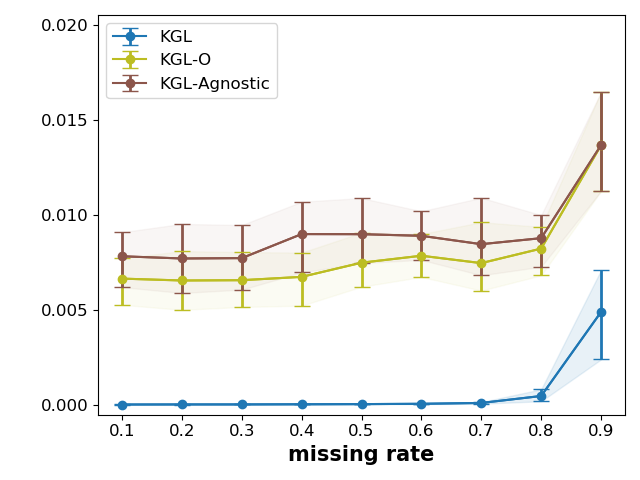}
     \subcaption{dependent data}
    \label{fig:exp_missing_Y_dep}
\end{subfigure}
\caption{The MSE of recovering missing entries in $\mathbf{Y}_m$. The results show the mean of 30 random experiments of 3 different types of graphs, i.e. 10 for each graph type.}
\label{fig:exp_missing_Y}
\end{figure}

In the experiment of learning a graph with incomplete data matrix in Section \ref{sec: syn_exp_missing_graph}, we are also interested in the performance of matrix completion. In Figure \ref{fig:exp_missing_Y}, we plot the MSE$_y$ of the missing entries (i.e. the out-of-sample MSE$_y$) that is averaged over all types of graphs against the varying missing rate. The proposed methods lead to much smaller errors compared to baseline models, for both independent and dependent data. 
It should be noted that $\textbf{GL}$ does not offer a mechanism for inferring missing data, hence is not included in this experiment.

\subsection{Learning a Graph of Different Sizes}

The graph learning performance of the proposed method varies with the size of graphs and number of observations. As shown in Figure \ref{fig:varying_size}, a hundred observations are sufficient to recover a small graph with $m = 20$ nodes with a high accuracy. When the graph size increases, the number of the observations required for $\textbf{KGL}$ to achieve a high APS increases roughly exponentially. On the other hand, when the number of observations increases, the variance of APS of the learned graphs decreases.  

\begin{figure}[t]
    \centering
     \includegraphics[width = 0.55\linewidth]{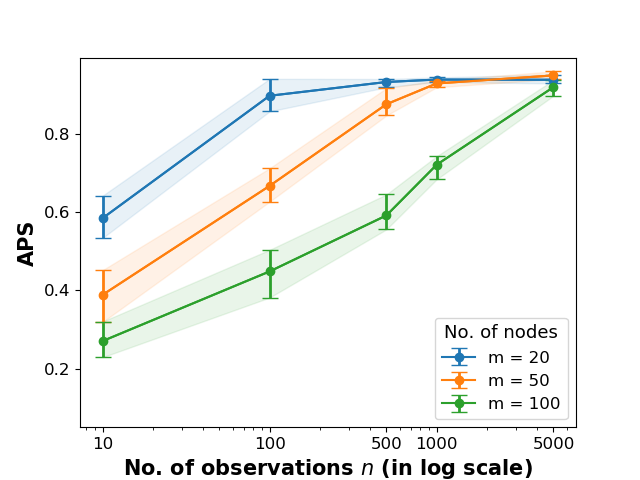}
    \caption{The performance of learning graphs of different sizes $m$ against varying number of observations $n$ with the proposed model \textbf{KGL}.  The results show the mean of 30 random experiments of 3 different types of graphs, i.e. 10 for each graph type.} 
    \label{fig:varying_size}
\end{figure}

\subsection{Impact of Regularisation Hyperparameters}
\label{sec:syn_exp_hyperparameters}
Three hyperparameters are involved in $\mathbf{KGL}$ and its variants. As introduced in Section \ref{sec: methods_KGL}, $\lambda > 0$ controls the complexity of the functional learning and prevents overfitting; $\rho > 0$ controls the relative importance of graph learning compared to functional learning, and at the same time determines the smoothness of the predicted data $\mathbf{\hat{y}}$ over $\mathbf{L} \otimes \mathbf{K}_z^{\dagger}$; Finally, $\psi > 0$ controls the Frobenius ($\ell_2$) norm of the graph Laplacian which, together with the trace ($\ell_1$) constraint, bears similarity to an elastic net regularisation \cite{zou2005regularization}. The larger the $\psi$, the less sparse the graph with more uniform edge weights.
The accuracy in learning the graph Laplacian $\mathbf{L}$ and inferring the data matrix $\mathbf{Y}$ is determined by the combination of these three hyperparameters, which is not straightforward to visualise and analyse at the same time. 
Fortunately, we may still gain some insights by examining their distinct effects separately. \par

Firstly, $\psi$ should be chosen according to the prior belief of the graph sparsity defined as the number of edges with non-zero weights. As shown in Figure \ref{fig:exp_ergu_G_heatmap} (in Appendix \ref{sec:appendix_regularisation}), when $\psi \rightarrow 0$, the learned graph contains only a few most significant edge. Due to the constraint on the sum of edge weights, i.e. $\text{tr}(\mathbf{L}) = m$, the total weights $m$ are allocated to a few significant edges when $\psi$ is small. When $\psi \rightarrow \infty$, the learned graph becomes fully connected with equal edge weights. In the synthetic experiment where we have knowledge of the groundtruth graph, the best accuracy is obtained when the sparsity coincides with the groundtruth graph. \par

The value of $\psi$, on the other hand, has little effect on the accuracy of predicting the missing entries in $\mathbf{Y}$, as the update step of $\mathbf{A}$ does not involve the term with $\psi$. As shown in Figure \ref{fig:exp_regularisation_path_accuracy}(a)-(d), the out-of-sample MSE$_{y}$ is determined by the combination of $\lambda$ and $\rho$. We first notice that the error is the same when $\lambda > 10^{2}$, showing that we overly penalise the function complexity in this case. Indeed, when $\lambda \rightarrow \infty$, the function is overly smooth such that the entries of the coefficient matrix $\mathbf{A}$ are all zero leading to the entries of prediction $\mathbf{\hat{Y}}$ being all zero as well. This also happens when $\rho \rightarrow \infty$, where the vector form of the prediction $\mathbf{\hat{y}}$ is forced to be overly smooth on $\mathbf{L} \otimes \mathbf{K}_z^{\dagger}$. In particular, when $\mathbf{K}_z^{\dagger} = \mathbf{I}_n$, every row vector of $\mathbf{\hat{Y}}$ (i.e. the predicted graph signal) has constant entries, as a result of minimising the term $\text{Tr}(\mathbf{\hat{Y}}^{\top}\mathbf{K}_z^{\dagger}\mathbf{\hat{Y}}\mathbf{L})$ to zero. On the other hand, when $\mathbf{K}_z^{\dagger} \neq \mathbf{I}_n$, all the entries in $\mathbf{\hat{Y}}$ has constant values such that this term is minimised to zero. \par

The ranges of values of $\alpha$ and $\rho$ for which the out-of-sample MSE$_{y}$ is the smallest generally coincide with that for which the APS is the largest in Figure \ref{fig:exp_regularisation_path_accuracy} (in Appendix \ref{sec:appendix_regularisation}), e.g. when $\psi = 10^{-5}$, $\alpha = 10^{-2}$ and $\rho = 10^{-2}$. However, the out-of-sample MSE$_{y}$ is generally small when $\alpha < 0.1$ and $\rho < 0.1$. This is understandable as the function could be very complex with little penalisation, but this does not guarantee good performance in recovering the groundtruth graph. 

\section{Real-World Experiments}

\subsection{Swiss Temperature Data}

In this experiment, we test the proposed models in learning a meteorological graph of 89 weather stations in  Switzerland from the incomplete temperature data\footnote{The data are obtained from \url{https://www.meteoswiss.admin.ch/home/climate/swiss-climate-in-detail/climate-normals/normal-values-per-measured-parameter.html}.}. The raw data matrix contains 12 rows representing the temperatures of 12 months that are averaged over 30 years from 1981 to 2010 and 89 columns representing 89 measuring stations. The raw data are preprosessed such that each row has a zero mean. \par

To have a fair comparison, we deliberately omit a portion of the data as input for the model candidates and treat as groundtruth the learned graph obtained from $\textbf{GL}$ using the complete 12-month data.
Specifically, we only use the first three-month temperature records (i.e. the first three rows) to learn a graph. To test the performance in missing scenario, we further generate 
the mask matrix $M$ with different rates of missing values, as described in Section \ref{sec: syn_exp_missing_graph}, and apply them to the three-month data. Similar to the synthetic experiment, we choose the hyperparameters in models that yield a graph density of 30\% (i.e. keeping 30\% most significant edges) to make the learned graphs comparable.\par

The altitude of a weather station is a useful node-side information for predicting temperature and learning a meteorological graph. Therefore, the corresponding kernel matrix $\mathbf{K}_x$ is obtained from an RBF kernel evaluated at the altitudes of each pair of weather stations for use in \textbf{KGL} and \textbf{KGL-N}. Monthly time-stamps correspond to the known observation-side information. The bandwidth parameter is chosen according to the median heuristic \cite{garreau2017large}. As described in Section \ref{sec: syn_exp_general_settings}, $\mathbf{K}_z$ is thus obtained from an RBF kernel evaluated at three time-stamps for three-month graph signals and used as input in \textbf{KGL}, \textbf{KGL-Agnostic} and \textbf{KGL-O}. For \textbf{GL} and \textbf{GL-2step}, no side information is used. The hyperparameters are tuned via a grid search and highest performance achieved by the best set of hyperparameters is reported. For the real-world scenario where a groundtruth graph is not easy to obtain, the hyperparameter can be chosen according to the results in Section \ref{sec:syn_exp_hyperparameters}.  \par

\begin{figure}[t]
\centering
\begin{subfigure}[t]{0.25\textwidth}
    \centering
    \includegraphics[width = 1\linewidth]{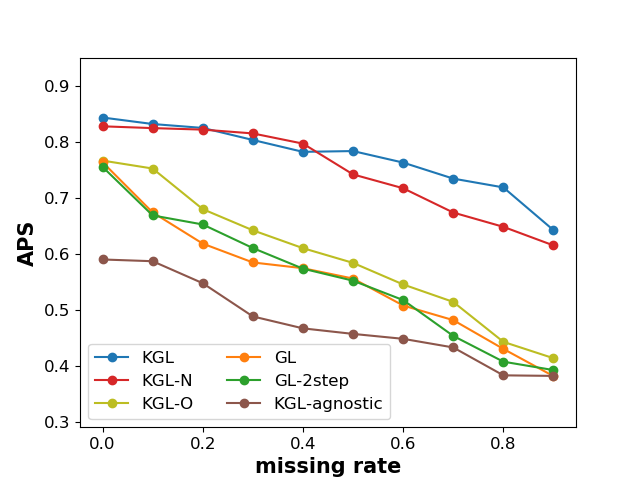}
    \subcaption{APS}
    \label{fig:real_temp_aps}
\end{subfigure}%
\begin{subfigure}[t]{0.25\textwidth}
    \centering
    \includegraphics[width = 1\linewidth]{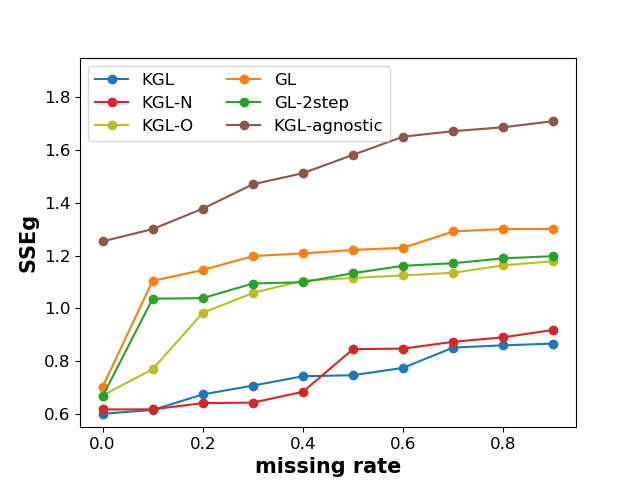}
     \subcaption{SSE$_\mathcal{G}$}
    \label{fig:real_temp_nmse}
\end{subfigure}
\caption{The performance of learning a meteorological graph of 89 Swiss weather stations from the incomplete temperature data with various missing rates.}
\label{fig:real_temp}
\end{figure}

We present the results in Figure \ref{fig:real_temp}. In terms of both APS and SSE$_\mathcal{G}$,  \textbf{KGL} and  \textbf{KGL-N} outperform the other candidates. This indicates that altitude is a reliable node-side covariate with which we can learn a meaningful meteorological graph despite a small number of signals. When there are more missing values, \textbf{KGL}, with the known temporal information, slightly outperforms \textbf{KGL-N}. However, since we only have three-month signals, the temporal information is less predictive. Since the groundtruth graph is learned from $\textbf{GL}$ with 12-month signals, the recovering ability of $\textbf{GL}$ and $\textbf{GL-2step}$ is not far behind when there is no missing values in three-month data, but drops sharply with an increasing missing rate. Compared to \textbf{KGL-O}, the poor performance of $\textbf{KGL-Agnostic}$ indicates that the imprecise smoothness term in Eq.(\ref{eq:baseline_KGL_ObsDep}) is the main reason that we cannot recover an annual meteorological graph with only three-month temperature records, as both of them are agnostic to the node-side information.

\subsection{Sushi Review Data}

In this experiment we will evaluate the performance of our proposed methods by comparing the recovered graphs with groundtruth using the Sushi review data collected in \cite{kamishima2003nantonac}. The authors tasked 5000 reviewers to rate 10 out of 100 sushis randomly with a score from 1 (least preferred) to 5 (most preferred);  reviews for each sushi are treated as one graph signal in this experiment. For each reviewer, we have 10 descriptive features which cover demographical information about the reviewers, such as age, gender and the region the reviewer currently lives in. We also have 7 attributes describing each sushi, including its oiliness in taste, normalised price and its grouping (for example, red-meat fish sushi, white-meat fish sushi or shrimp sushi). We will treat the grouping attribute as the underlying groundtruth label for each sushi and not use it in the KGL algorithm. 

We will consider 32 sushis from 5 sushi groups, namely red-meat (7 sushis), clam (6 sushis), blue-skinned fish (8 sushis), vegetable (6 sushis) and roe sushi (5 sushis). These are treated as the groundtruth labels. Our goal will be to recover a graph of sushis which contains clusters corresponding to these group labels (while omitting the group attribute from the node-side information, i.e. we retain only 6 remaining attributes). 

We pick an increasing number of reviewers at random for our experiment. This is to demonstrate how the algorithm performs under different number of signals. After preprocessing, we arrive to a data matrix with 32 columns, each representing a type of sushi, and rows representing each reviewer's rating to the sushis. This is a sparse matrix with an average sparsity of $74 \%$. We run \textbf{KGL}, \textbf{KGL-N}, \textbf{KGL-O}, \textbf{GL}, \textbf{GL-2step} and \textbf{KGL Agnostic} to obtain a graph of sushis. To evaluate the result quantitatively, we compute the \textit{normalised mutual information} (NMI) between the cluster assignments obtained by applying spectral clustering \cite{von2007tutorial} to the recovered graphs and the underlying sushi grouping. NMI is used to measure the agreement between two grouping assignments, 0 indicating no mutual information while 1 indicating perfect correlation. To emphasise that node-side attributes alone are not sufficient to recover the groundtruth label, we also applied clustering on the RBF graph obtained from remaining six sushi attributes, which resulted in an NMI score of 0.34.

\begin{figure}[t]
    \centering
    \includegraphics[width=0.55\linewidth]{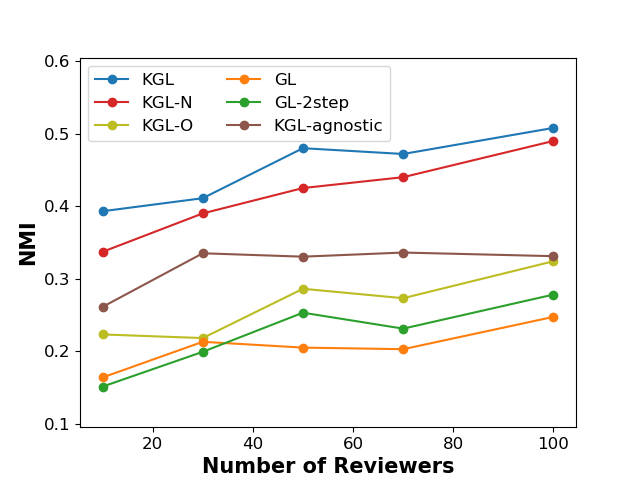}
    \caption{Agreement between cluster assignments on learned sushi graphs and the withheld sushi group attributes. Graphs are learned from incomplete sushi review data.}
    \label{fig:sushis}
\end{figure}

Figure \ref{fig:sushis} illustrated our results. \textbf{KGL} was the best performer, followed by \textbf{KGL-N}, and both significantly outperformed \textbf{KGL-Agnostic}, \textbf{KGL-O}, \textbf{GL-2step} and \textbf{GL}. Moreover, both \textbf{KGL} and \textbf{KGL-N} outperformed the case where we solely use the RBF Graph. The results demonstrate the merit of our proposed methods in incorporating side information for graph recovery, in particular the observation-side information (i.e. the reviewers' information) to capture the dependency between the observed signals.

\section{Conclusion}

In this paper, we have revisited the smooth graph signals from a functional viewpoint and proposed a kernel-based graph learning framework that can integrate node-side and observation-side covariates. Specifically, we have designed a novel notion of smoothness of graph signals over the Kronecker product of two graph Laplacian matrices and combined it with a Kronecker product kernel regression of graph signals in order to capture the two-side dependency. We have shown the effectiveness and efficiency of the proposed method, via extensive synthetic and real-world experiments, demonstrating its usefulness in learning a meaningful topology from noisy, incomplete and dependent graph signals. Although we have proposed a fast implementation exploiting the Kronecker structure of kernel matrices, the computational complexity remains cubic in the maximum of the number of nodes and the number of signals. Hence, a natural future direction is to further reduce the computational complexity with the state-of-the-art methods for large-scale kernel-based learning, such as random Fourier features. Another interesting direction is to develop a generative graph learning model based upon the framework presented here, using connections between Gaussian processes and kernel methods.


%



\clearpage
\appendices
\section{Kronecker product kernel regression}
\label{sec: appendix_generative_model_interpretation}
Taking a Bayesian viewpoint, we
provide an interpretation of 
Eq.(\ref{eq: kronecker_regression}). From Eq.(\ref{eq: y_generative_all}), maximising the log-posterior of the coefficient vector $\mathbf{a}$ leads to
the objective in Eq.(\ref{eq: kronecker_regression}):
\begin{equation}
\nonumber
\begin{split}
 \max_{\mathbf{a}} & \log p(\mathbf{a}|\mathbf{y}) \\ 
 = \max_{\mathbf{a}} & \log p(\mathbf{y}|\mathbf{a}) + \log p(\mathbf{a}) \\
=  \max_{\mathbf{a}} &  - \big( \mathbf{y} -  (\mathbf{K}_x \otimes \mathbf{K}_z) \mathbf{a} \big)^{\top} \big( \mathbf{y} -  (\mathbf{K}_z \otimes \mathbf{K}_z \mathbf{a} \big) \\
& - \lambda \mathbf{a}^{\top} (\mathbf{K}_x \otimes \mathbf{K}_z)\mathbf{a} \\
= \min_{\mathbf{a}} & ||\mathbf{y} -  (\mathbf{K}_x \otimes \mathbf{K}_z)\mathbf{a} ||^2_2 + \lambda \mathbf{a}^{\top} (\mathbf{K}_x \otimes \mathbf{K}_z)\mathbf{a} \\
= \min_{\mathbf{A}} & ||\mathbf{Y} - \mathbf{K}_z \mathbf{A}  \mathbf{K}_x||_F^2  + \lambda \text{Tr}(\mathbf{K}_z \mathbf{A}  \mathbf{K}_x \mathbf{A}^{\top})
\end{split}
\end{equation}%
where $\lambda$ is some constant parameter proportional to the variance of the noise $\sigma_\epsilon^2$ in Eq.(\ref{eq: y_generative_data_likelhood}).
 \par

\section{Kronecker Product Laplacian-like Operator}
\label{sec: appendix_Kron_Laplacian}
We define a Laplacian-like operator with a Kronecker product structure $\mathbf{L}_{\otimes} =  \mathbf{L}_x \otimes \mathbf{L}_z$ for the data matrix with both node-side and observation-side dependency. Although $\mathbf{L}_{\otimes}$ may have positive off-diagonal entries and thus may not be a valid graph Laplacian matrix, it satisfies the following properties and the notion of frequencies of $\mathbf{y}$ can be defined upon $\mathbf{L}_{\otimes}$:
\begin{itemize}
    \item $\mathbf{L}_{\otimes}$ is symmetric and $\mathbf{L}_{\otimes}\cdot\mathbf{1} = \mathbf{0}$;
    \item $\mathbf{L}_{\otimes}$ admits the eigendecomposition 
    \begin{equation}
    \begin{split}
    \nonumber
        \mathbf{L}_{\otimes} & = \mathbf{L}_x \otimes \mathbf{L}_z \\
        & = (\mathbf{U}_x \otimes \mathbf{U}_z) (\boldsymbol{\Lambda}_x \otimes \boldsymbol{\Lambda}_z) (\mathbf{U}_x \otimes \mathbf{U}_z)^{\top} 
    \end{split}
    \end{equation}%
where $\mathbf{U}_x$ and $\boldsymbol{\Lambda}_x$, and $\mathbf{U}_z$ and $\boldsymbol{\Lambda}_z$, are the eigenvector and eigenvalue matrices of the Laplacian matrices $\mathbf{L}_x$ and $\mathbf{L}_z$, respectively, and
$\mathbf{U} = (\mathbf{U}_x \otimes \mathbf{U}_z)$ is an orthogonal matrix and $\boldsymbol{\Lambda} = (\boldsymbol{\Lambda}_x \otimes \boldsymbol{\Lambda}_z)$ is a diagonal matrix with real entries. 
\end{itemize}%
We can also obtain a two-dimensional graph Fourier transform $\mathbf{\check{Y}}$ of $\mathbf{Y}$ as in \cite{monti2017geometric}:
\begin{equation}
\nonumber
\begin{split}
    \text{vec}(\mathbf{\check{Y}}) = (\mathbf{U}_x \otimes \mathbf{U}_z)^{\top} \text{vec}(\mathbf{Y}). 
\end{split}
\end{equation}

\section{Proof of positive semi-definiteness}
\label{sec: appendix_psd}
\begin{lemma}
\label{lemma: optiA_matrix_psd}
The matrix $\mathbf{C} = \mathbf{K}^2 + \lambda \mathbf{K} + \rho \mathbf{S} \otimes \mathbf{K}_z$ is positive semi-definite.
\end{lemma}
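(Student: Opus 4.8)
The plan is to write $\mathbf{C}$ as a sum of three positive semi-definite (PSD) matrices and invoke the closure of the PSD cone under addition and nonnegative scaling. First I would record the structural facts from the setup: the Gram matrices $\mathbf{K}_x$ and $\mathbf{K}_z$ are symmetric PSD by the definition of a kernel, and any $\mathbf{L} \in \mathcal{L}$ is a combinatorial graph Laplacian with non-positive off-diagonals and zero row sums, hence symmetric PSD, since its quadratic form equals $\tfrac{1}{2}\sum_{j,j'} w_{jj'}(v_j - v_{j'})^2 \ge 0$ with $w_{jj'} = -\mathbf{L}_{jj'} \ge 0$. I would also note that each summand is symmetric, so $\mathbf{C}$ is symmetric and the PSD question is well posed.

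Next I would dispatch the three terms in turn. For $\mathbf{K}^2$, symmetry of $\mathbf{K} = \mathbf{K}_x \otimes \mathbf{K}_z$ gives $\mathbf{v}^\top \mathbf{K}^2 \mathbf{v} = \|\mathbf{K}\mathbf{v}\|_2^2 \ge 0$, so $\mathbf{K}^2 \succeq 0$. For $\lambda \mathbf{K}$, I would use the property already invoked in the main text that the Kronecker product of two PSD matrices is PSD, giving $\mathbf{K} \succeq 0$, and multiplication by $\lambda > 0$ preserves this.

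The only term that genuinely needs an argument is $\rho\,\mathbf{S} \otimes \mathbf{K}_z$. I would first show $\mathbf{S} = \mathbf{K}_x \mathbf{L} \mathbf{K}_x \succeq 0$ by viewing it as a congruence of $\mathbf{L}$ through the symmetric matrix $\mathbf{K}_x$: for any $\mathbf{v}$, setting $\mathbf{u} = \mathbf{K}_x \mathbf{v}$ yields $\mathbf{v}^\top \mathbf{K}_x \mathbf{L} \mathbf{K}_x \mathbf{v} = \mathbf{u}^\top \mathbf{L} \mathbf{u} \ge 0$ since $\mathbf{L} \succeq 0$. Applying once more that the Kronecker product preserves PSD gives $\mathbf{S} \otimes \mathbf{K}_z \succeq 0$, and scaling by $\rho > 0$ keeps it so. Summing the three nonnegative quadratic forms gives $\mathbf{v}^\top \mathbf{C} \mathbf{v} \ge 0$ for every $\mathbf{v}$, which is the claim.

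I do not anticipate a real obstacle. The single point requiring care is the PSD-ness of the triple product $\mathbf{K}_x \mathbf{L} \mathbf{K}_x$, which is not manifestly in a symmetric square form at first glance; the congruence observation resolves it cleanly, without appealing to any general result on products of PSD matrices. Everything else reduces to the standard behaviour of the Kronecker product and the additive closure of the PSD cone.
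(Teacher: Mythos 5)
Your proof is correct, and it follows the same top-level plan as the paper --- splitting $\mathbf{C}$ into the three summands $\mathbf{K}^2$, $\lambda\mathbf{K}$, $\rho\,\mathbf{S}\otimes\mathbf{K}_z$ and using closure of the PSD cone under addition, positive scaling, and Kronecker products --- but the way you handle two of the three terms is genuinely different. For $\mathbf{S} = \mathbf{K}_x\mathbf{L}\mathbf{K}_x$, the paper expands it explicitly as $\sum_{j\neq j'} w_{jj'}(\mathbf{k}_j - \mathbf{k}_{j'})(\mathbf{k}_j - \mathbf{k}_{j'})^{\top}$ with $w_{jj'}\geq 0$, i.e.\ a non-negative combination of rank-one PSD matrices (``a weighted covariance matrix''), which uses the non-negativity of the edge weights directly and never states $\mathbf{L}\succeq 0$ as a separate fact; your congruence argument $\mathbf{v}^{\top}\mathbf{K}_x\mathbf{L}\mathbf{K}_x\mathbf{v} = (\mathbf{K}_x\mathbf{v})^{\top}\mathbf{L}(\mathbf{K}_x\mathbf{v})\geq 0$ is the abstract counterpart of that expansion, at the cost of first establishing $\mathbf{L}\succeq 0$ via the Laplacian quadratic form (which you do correctly), and with the benefit that it works verbatim for any PSD middle factor, not just a combinatorial Laplacian. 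For $\mathbf{K}^2$, the paper appeals to a cited result on products of commuting PSD matrices, whereas your observation $\mathbf{v}^{\top}\mathbf{K}^2\mathbf{v} = \|\mathbf{K}\mathbf{v}\|_2^2 \geq 0$ (valid because $\mathbf{K}$ is symmetric) is more elementary and removes the external dependency. Finally, where you cite Kronecker preservation of PSD-ness as standard, the paper proves it inline via the eigendecomposition $(\mathbf{U}_x\otimes\mathbf{U}_z)(\boldsymbol{\Lambda}_x\otimes\boldsymbol{\Lambda}_z)(\mathbf{U}_x\otimes\mathbf{U}_z)^{\top}$; either is acceptable. Net effect: your argument is more self-contained and slightly more general, while the paper's version exposes the explicit rank-one structure of $\mathbf{S}$.
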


\begin{proof}
To prove $\mathbf{C}$ is positive semi-definite (p.s.d.), it suffices to show that the matrices $\mathbf{K}^2$, $\mathbf{K}$ and $\mathbf{S} \otimes \mathbf{K}_z$ are p.s.d. \par
As $\mathbf{K}_x$ and $\mathbf{K}_z$ are kernel matrices constructed by pairwise evaluations from two reproducing kernels $\kappa_\mathcal{X}$ and  $\kappa_\mathcal{Z}$, they are p.s.d. The Kronecker product $\mathbf{K}$ is p.s.d, which is easy to prove from the eigendecomposition:
\begin{equation}
\begin{split}
\label{eq: append_psd_eigen}
    \mathbf{K} & = \mathbf{K}_x \otimes \mathbf{K}_z \\
    & = (\mathbf{U}_x \boldsymbol{\Lambda}_x \mathbf{U}_x^{\top} ) \otimes(\mathbf{U}_z \boldsymbol{\Lambda}_z \mathbf{U}_z^{\top} ) \\
    & = (\mathbf{U}_x \otimes \mathbf{U}_z) (\boldsymbol{\Lambda}_x \otimes \boldsymbol{\Lambda}_z) (\mathbf{U}_x \otimes \mathbf{U}_z)^{\top} 
\end{split}
\end{equation}%
where $\mathbf{U} = (\mathbf{U}_x \otimes \mathbf{U}_z)$ is an orthogonal matrix and $\boldsymbol{\Lambda} = (\boldsymbol{\Lambda}_x \otimes \boldsymbol{\Lambda}_z)$ is a diagonal matrix with non-negative real entries. \par
Next, $\mathbf{K}^2 = \mathbf{K}\mathbf{K}$ is also p.s.d., as it is a product of commuting matrices and $\mathbf{K}^2$ preserves symmetry \cite{meenakshi1999product}. \par
Finally, denote the column vectors of $\mathbf{K}_x$ as $[\mathbf{k}_1, \mathbf{k}_2, \dots, \mathbf{k}_m]$, the weighted adjacency matrix as $\mathbf{W}$, and the non-negative edge weight between node $j$ and $j'$ as $w_{jj'}$. We have
\begin{equation}
\nonumber
    \mathbf{S} = \mathbf{K}_x \mathbf{L}\mathbf{K}_x =  \sum_{j \neq j'} w_{jj'} (\mathbf{k}_j - \mathbf{k}_{j'})(\mathbf{k}_j - \mathbf{k}_{j'})^{\top}
\end{equation}%
where $w_{jj'} \geq 0$. $\mathbf{S}$ can then be viewed as a weighted covariance matrix, which is symmetric and p.s.d. Therefore, $\mathbf{S} \otimes \mathbf{K}_z$ is p.s.d. following the same argument as for $\mathbf{K}$.

\end{proof}


\section{Derivation of Gradient in Eq.(\ref{eq: SKGL_gradient_noK})}
\label{sec: appendix_missing_optimisation}


In this section, we show the derivation of Eq.(\ref{eq: SKGL_gradient_noK}), i.e. the gradient for updating $\mathbf{A}$ in the missing values scenario. Recall that the objective function is 
\begin{equation}
\nonumber
\begin{split}
    J_{\mathbf{L}}(\mathbf{A})  = & ||\mathbf{M} \circ(\mathbf{Y} - \mathbf{K}_z \mathbf{A}  \mathbf{K}_x)||_F^2  + \lambda \text{Tr}(\mathbf{K}_z \mathbf{A}  \mathbf{K}_x \mathbf{A}^{\top}) \\ & + \rho \text{Tr}(\mathbf{A} \mathbf{K}_x \mathbf{L} \mathbf{K}_x \mathbf{A}^{\top} \mathbf{K}_z).
\end{split}
\end{equation}%
With the following standard linear algebra identities for any matrices $\mathbf{D}, \mathbf{E}, \mathbf{F}$ 
\begin{itemize}
    \item $||\mathbf{D} \circ \mathbf{E}||^2_F = \text{Tr}\big( (\mathbf{D} \circ \mathbf{E})^{\top}  (\mathbf{D} \circ \mathbf{E}) \big) = \text{Tr}\big( \mathbf{E}^{\top}( \mathbf{D} \circ \mathbf{E}) \big)$
    \item $\text{vec}(\mathbf{D} \circ \mathbf{E}) = \text{vec}(\mathbf{D}) \circ \text{vec}(\mathbf{E})$
    \item $\text{Tr}(\mathbf{D}^{\top} \mathbf{E}) = \text{vec}(\mathbf{D})^{\top} \text{vec}(\mathbf{E})$
    \item $\text{vec}(\mathbf{DEF}) = (\mathbf{F}^{\top} \otimes \mathbf{D})\text{vec}(\mathbf{E})$
\end{itemize}%
the first term of $J_{\mathbf{L}}(\mathbf{A})$ becomes
\begin{subequations}
\nonumber
\begin{align}
    &  ||\mathbf{M} \circ (\mathbf{Y} - \mathbf{K}_z \mathbf{A}  \mathbf{K}_x)||_F^2  \nonumber \\
    = &  \text{Tr} \Big( (\mathbf{Y} - \mathbf{K}_z \mathbf{A} \mathbf{K}_x)^{\top} \big( \mathbf{M} \circ (\mathbf{Y} - \mathbf{K}_z \mathbf{A} \mathbf{K}_x) \big) \Big) \nonumber 
\end{align}
\end{subequations}%
By dropping constant terms, we have
\begin{subequations}
\begin{align}
     &  ||\mathbf{M} \circ (\mathbf{Y} - \mathbf{K}_z \mathbf{A}  \mathbf{K}_x)||_F^2  \nonumber \\
     = & \text{Tr} \Big( -2(\mathbf{K}_z \mathbf{A} \mathbf{K}_x)^{\top} (\mathbf{M} \circ \mathbf{Y}) \Big) \nonumber \\ & +  \text{Tr} \Big( (\mathbf{K}_z \mathbf{A} \mathbf{K}_x)^{\top} \big(\mathbf{M} \circ (\mathbf{K}_z \mathbf{A} \mathbf{K}_x) \big) \Big) \nonumber  \\
    =  & -2  \text{vec}(\mathbf{A})^{\top} (\mathbf{K}_x \otimes \mathbf{K}_z) \text{vec}(\mathbf{M} \circ \mathbf{Y}) \nonumber \\ & +  \text{vec} (\mathbf{K}_z \mathbf{A} \mathbf{K}_x)^{\top} \big( \text{vec}(\mathbf{M}) \circ \text{vec}(\mathbf{K}_z \mathbf{A} \mathbf{K}_x) \big) \nonumber  \\
    = & -2 \mathbf{a}^{\top} \mathbf{K} \text{vec}(\mathbf{M} \circ \mathbf{Y}) +  \mathbf{a}^{\top} \mathbf{K} \big( \mathbf{m} \circ (\mathbf{K} \mathbf{a} ) \big) \nonumber 
\end{align}
\end{subequations}%
where $\mathbf{a} = \text{vec}(\mathbf{A})$, $\mathbf{m} = \text{vec}(\mathbf{M})$ and $\mathbf{K} = \mathbf{K}_x \otimes \mathbf{K}_z$. Putting it back to the objective and recognising the fact that $\mathbf{d} \circ \mathbf{e} = \text{diag}(\mathbf{d})\mathbf{e}$ for two vectors $\mathbf{d}$ and $\mathbf{e}$, we have 
\begin{equation}
\nonumber
\begin{split}
   J_{\mathbf{L}}(\mathbf{a}) = & -2 \mathbf{a}^{\top} \mathbf{K} \text{vec}(\mathbf{M} \circ \mathbf{Y}) +  \mathbf{a}^{\top} \mathbf{K} \big( \mathbf{m} \circ (\mathbf{K} \mathbf{a} )\big) \\ 
   & \quad + \lambda \mathbf{a}^{\top} \mathbf{K} \mathbf{a} + \rho  \mathbf{a}^{\top} (\mathbf{S} \otimes \mathbf{K}_z) \mathbf{a} \\
   = & -2 \mathbf{a}^{\top} \mathbf{K} \text{vec}(\mathbf{M} \circ \mathbf{Y}) +  \mathbf{a}^{\top} \mathbf{K} \text{diag}(\mathbf{m}) \mathbf{K} \mathbf{a}  \\ 
   & \quad + \lambda \mathbf{a}^{\top} \mathbf{K} \mathbf{a} + \rho  \mathbf{a}^{\top} (\mathbf{S} \otimes \mathbf{K}_z) \mathbf{a}
\end{split}   
\end{equation}%
where $\mathbf{S} = \mathbf{K}_x \mathbf{L} \mathbf{K}_x$. We thus obtain the gradient for deriving Eq.(\ref{eq: SKGL_gradient}) such that
\begin{subequations}
\nonumber
\begin{align}
    \nabla J_{\mathbf{L}}(\mathbf{a}) = & -  \mathbf{K} \text{vec}(\mathbf{M} \circ \mathbf{Y}) +  \mathbf{K} \text{diag}(\mathbf{m}) \mathbf{K} \mathbf{a} \\
    & \quad + \lambda \mathbf{K}  \mathbf{a} + \rho \big( \mathbf{S} \otimes \mathbf{K}_z \big) \mathbf{a}.
\end{align}
\end{subequations}


\section{Additional Results for Synthetic Experiments}

\subsection{Learning ER and BA Graphs from Noisy Data}
\label{sec: appendix_figures_noisy}

Following the settings in Section \ref{sec:syn_exp_noisy}, we present in Figure \ref{fig:exp_noisy_ER_appendix} and Figure \ref{fig:exp_noisy_BA_appendix} the results of recovering $\mathcal{G}_{\text{ER}}$ and $\mathcal{G}_{\text{BA}}$ from independent data and dependent data with different noise levels, respectively.

\begin{figure}[]
\begin{subfigure}[t]{0.25\textwidth}
    \centering
    \includegraphics[width = 1\linewidth]{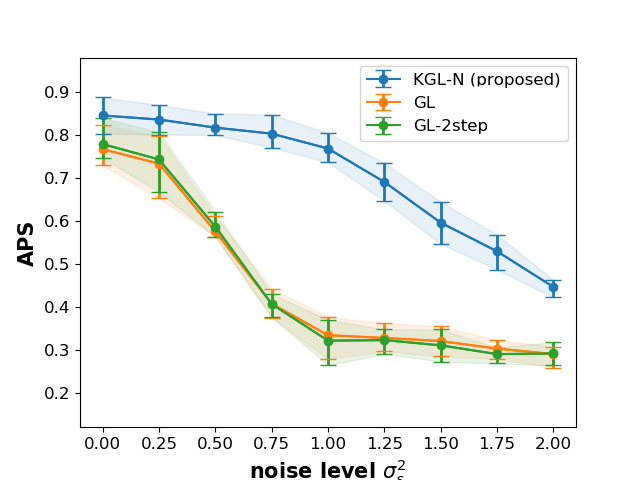}
    \subcaption{$\mathcal{G}_{\text{ER}}$, independent data}
    \label{fig:exp_noisy_ER_APS_2side}
\end{subfigure}%
\begin{subfigure}[t]{0.25\textwidth}
    \centering
    \includegraphics[width = 1\linewidth]{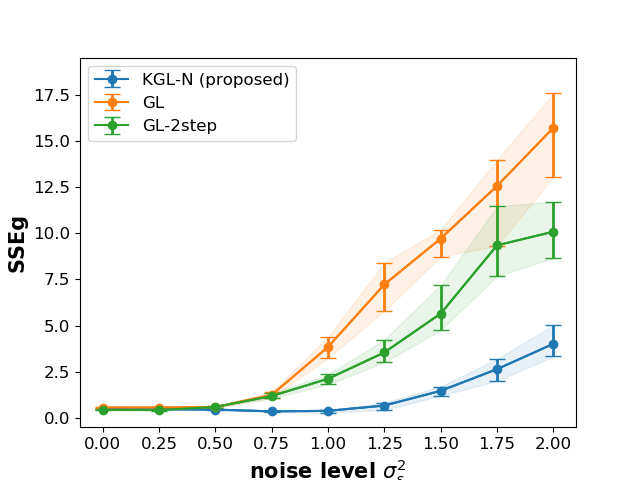}
     \subcaption{$\mathcal{G}_{\text{ER}}$, independent data}
    \label{fig:exp_noisy_ER_SSEg_2side}
\end{subfigure}
\begin{subfigure}[t]{0.25\textwidth}
    \centering
    \includegraphics[width = 1\linewidth]{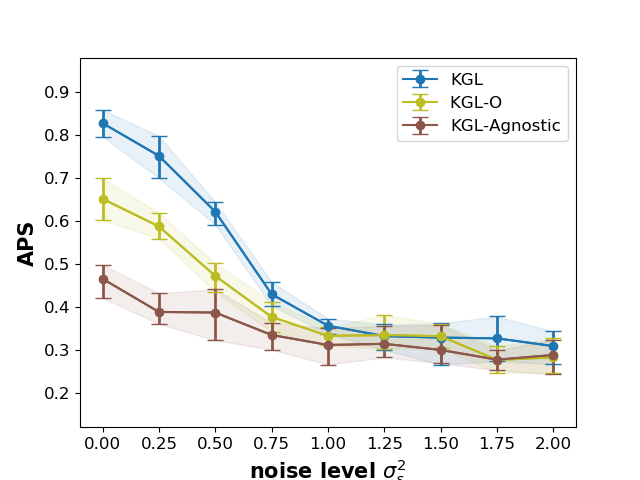}
    \subcaption{$\mathcal{G}_{\text{ER}}$, dependent data}
    \label{fig:exp_noisy_ER_APS_2side}
\end{subfigure}%
\begin{subfigure}[t]{0.25\textwidth}
    \centering
    \includegraphics[width = 1\linewidth]{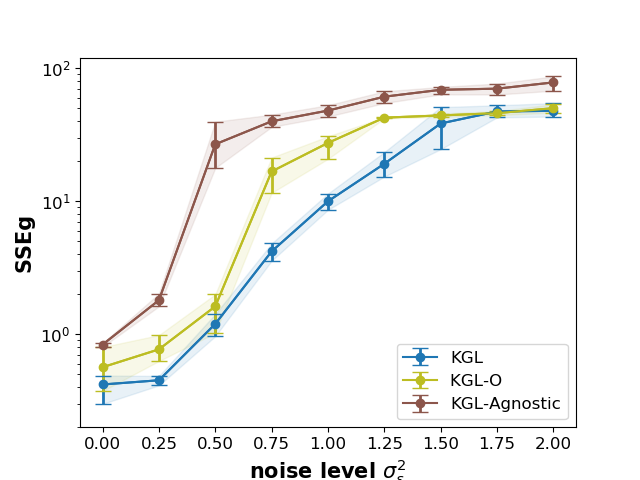}
     \subcaption{$\mathcal{G}_{\text{ER}}$, dependent data}
    \label{fig:exp_noisy_ER_SSEg_2side}
\end{subfigure}
\caption{The performance of recovering groundtruth graphs $\mathcal{G}_{\text{ER}}$ from independent data (1st row) and dependent data (2nd row) with different noise levels.}
\label{fig:exp_noisy_ER_appendix}
\end{figure}

\begin{figure}[]
\begin{subfigure}[t]{0.25\textwidth}
    \centering
    \includegraphics[width = 1\linewidth]{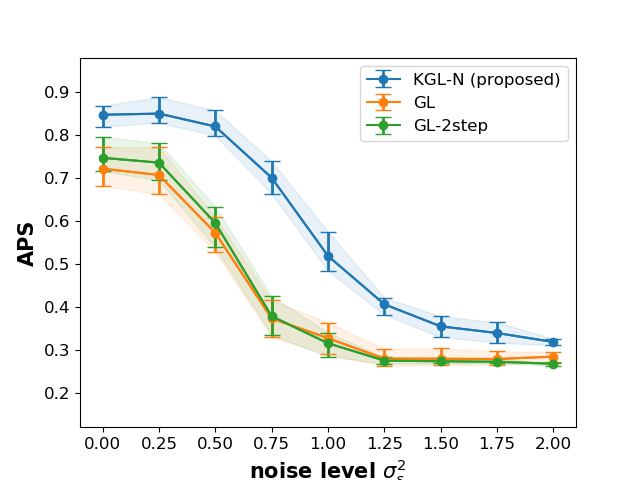}
    \subcaption{$\mathcal{G}_{\text{BA}}$, independent data}
    \label{fig:exp_noisy_BA_APS_2side}
\end{subfigure}%
\begin{subfigure}[t]{0.25\textwidth}
    \centering
    \includegraphics[width = 1\linewidth]{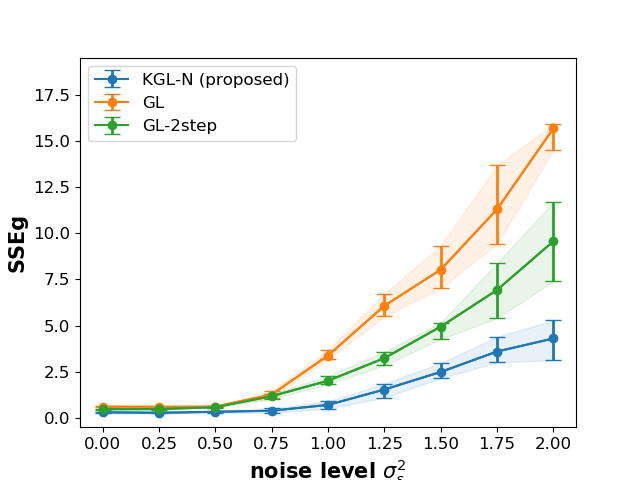}
     \subcaption{$\mathcal{G}_{\text{BA}}$, independent data}
    \label{fig:exp_noisy_BA_SSEg_2side}
\end{subfigure}
\begin{subfigure}[t]{0.25\textwidth}
    \centering
    \includegraphics[width = 1\linewidth]{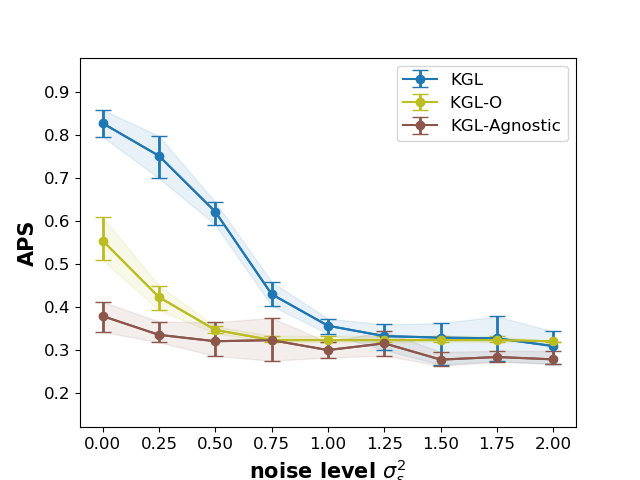}
    \subcaption{$\mathcal{G}_{\text{BA}}$, dependent data}
    \label{fig:exp_noisy_BA_APS_2side}
\end{subfigure}%
\begin{subfigure}[t]{0.25\textwidth}
    \centering
    \includegraphics[width = 1\linewidth]{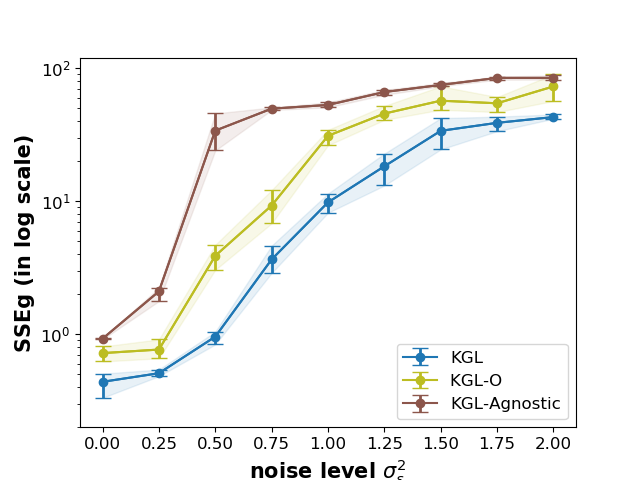}
     \subcaption{$\mathcal{G}_{\text{BA}}$, dependent data}
    \label{fig:exp_noisy_BA_SSEg_2side}
\end{subfigure}
\caption{The performance of recovering groundtruth graphs $\mathcal{G}_{\text{BA}}$ from independent data (1st row) and dependent data (2nd row) with different noise levels.}
\label{fig:exp_noisy_BA_appendix}
\end{figure}

\subsection{Learning ER and BA Graphs from Missing Data}
\label{sec: appendix_figures_missing}

Following the settings in Section \ref{sec:syn_exp_noisy}, we present in Figure \ref{fig:exp_missing_ER_appendix} and Figure  \ref{fig:exp_missing_BA_appendix} the results of recovering $\mathcal{G}_{\text{ER}}$ and $\mathcal{G}_{\text{BA}}$ from independent data and dependent data with different missing rates, respectively.

\begin{figure}[]
\centering
    \begin{subfigure}[t]{0.25\textwidth}
    \centering
    \includegraphics[width = 1\linewidth]{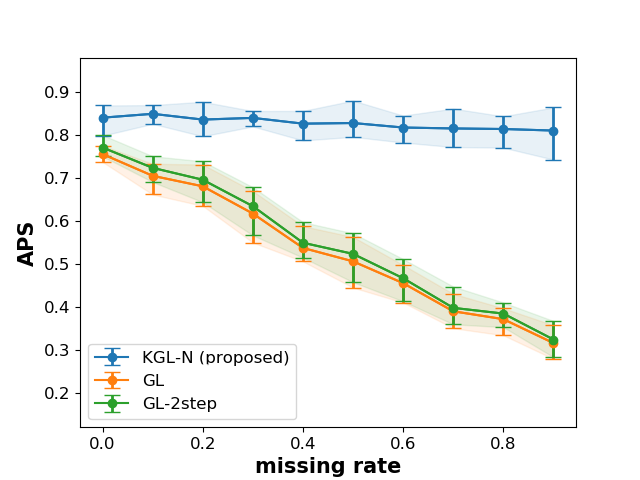}
    \subcaption{$\mathcal{G}_{\text{ER}}$, independent data}
    \label{fig:exp_missing_ER_APS}
\end{subfigure}%
\begin{subfigure}[t]{0.25\textwidth}
    \centering
    \includegraphics[width = 1\linewidth]{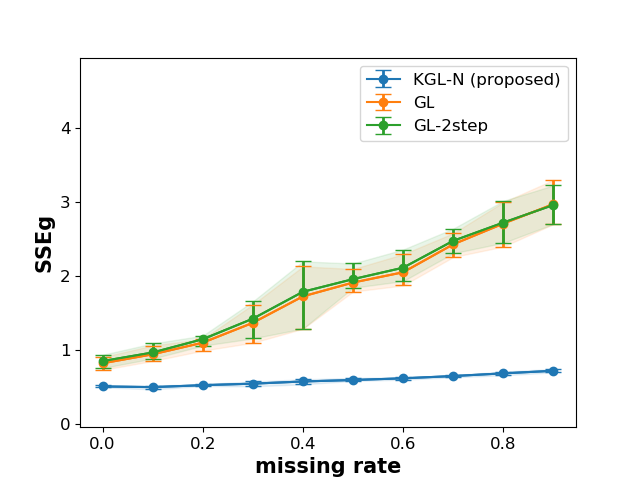}
     \subcaption{$\mathcal{G}_{\text{ER}}$, independent data}
    \label{fig:exp_missing_ER_SSEg}
\end{subfigure}
\begin{subfigure}[t]{0.25\textwidth}
    \centering
    \includegraphics[width = 1\linewidth]{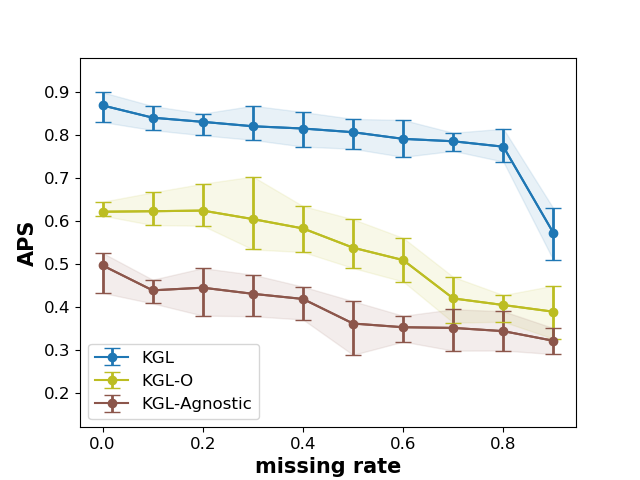}
    \subcaption{$\mathcal{G}_{\text{ER}}$, dependent data}
    \label{fig:exp_missing_ER_APS_2side}
\end{subfigure}%
\begin{subfigure}[t]{0.25\textwidth}
    \centering
    \includegraphics[width = 1\linewidth]{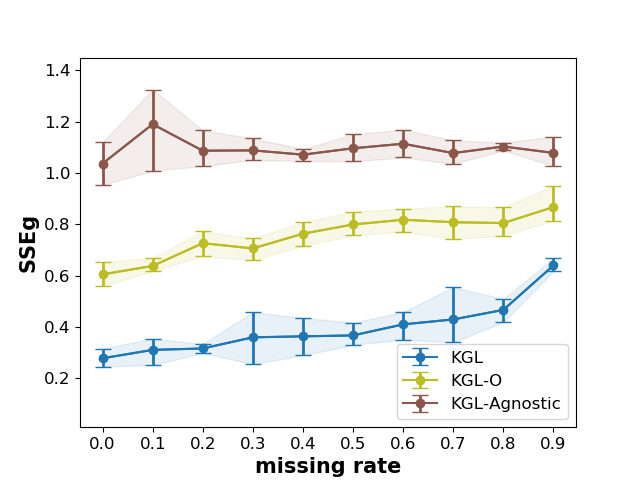}
     \subcaption{$\mathcal{G}_{\text{ER}}$, dependent data}
    \label{fig:exp_missing_ER_SSEg_2side}
\end{subfigure}
\caption{The performance of recovering groundtruth graphs $\mathcal{G}_{\text{ER}}$ from independent data (1st row) and dependent data (2nd row) with different rates of missing values in $\mathbf{Y}$.}
\label{fig:exp_missing_ER_appendix}
\end{figure}

\begin{figure}[]
\centering
\begin{subfigure}[t]{0.25\textwidth}
    \centering
    \includegraphics[width = 1\linewidth]{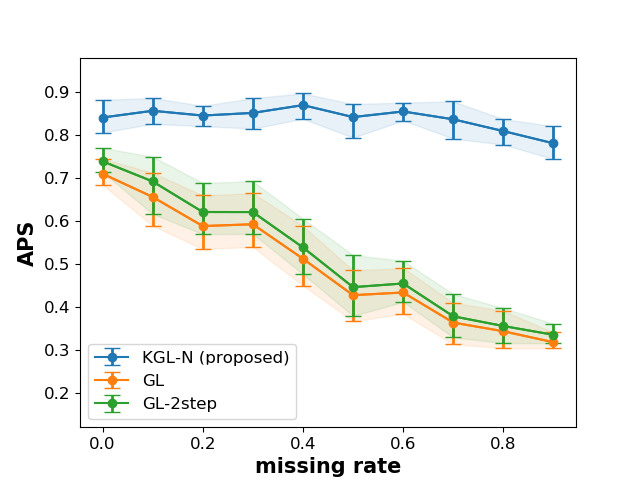}
    \subcaption{$\mathcal{G}_{\text{BA}}$, independent data}
    \label{fig:exp_missing_BA_APS}
\end{subfigure}%
\begin{subfigure}[t]{0.25\textwidth}
    \centering
    \includegraphics[width = 1\linewidth]{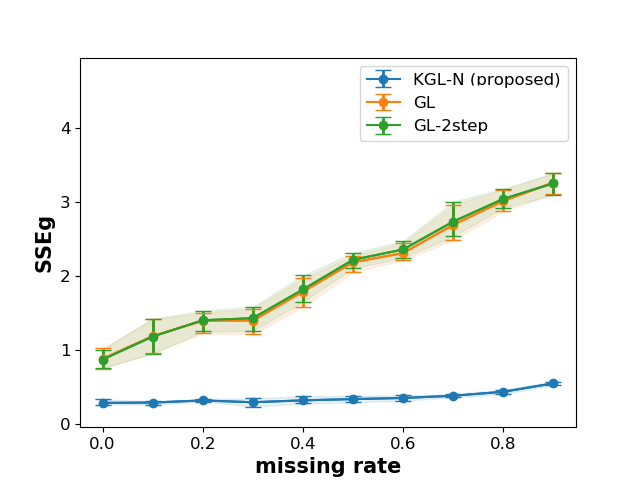}
     \subcaption{$\mathcal{G}_{\text{BA}}$, independent data}
    \label{fig:exp_missing_BA_SSEg}
\end{subfigure}
\begin{subfigure}[t]{0.25\textwidth}
    \centering
    \includegraphics[width = 1\linewidth]{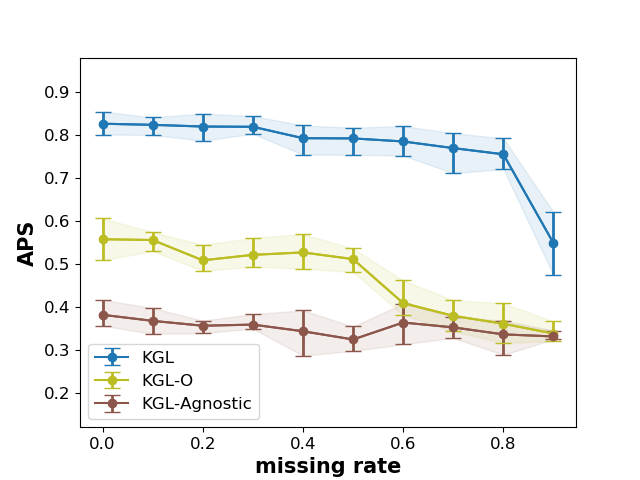}
    \subcaption{$\mathcal{G}_{\text{BA}}$, dependent data}
    \label{fig:exp_missing_BA_APS_2side}
\end{subfigure}%
\begin{subfigure}[t]{0.25\textwidth}
    \centering
    \includegraphics[width = 1\linewidth]{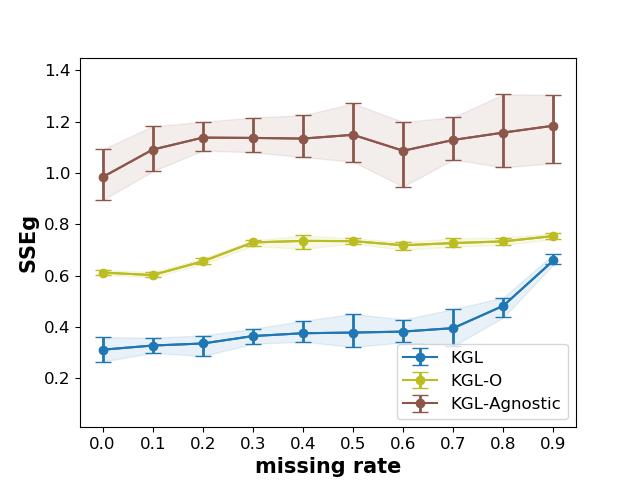}
     \subcaption{$\mathcal{G}_{\text{BA}}$, dependent data}
    \label{fig:exp_missing_BA_SSEg_2side}
\end{subfigure}
\caption{The performance of recovering groundtruth graphs $\mathcal{G}_{\text{BA}}$ from independent data (1st row) and dependent data (2nd row) with different rates of missing values in $\mathbf{Y}$.}
\label{fig:exp_missing_BA_appendix}
\end{figure}

\subsection{Impact of Regularisation Hyperparameters}
\label{sec:appendix_regularisation}

Figure \ref{fig:exp_ergu_G_heatmap} and Figure \ref{fig:exp_regularisation_path_accuracy} illustrate the learning performance with respect to the 
three hyperparameters in the proposed \textbf{KGL} model in Section \ref{sec:syn_exp_hyperparameters}.

\begin{figure*}[]
\centering
\begin{subfigure}[t]{0.19\textwidth}
    \centering
    \includegraphics[width = 1\linewidth]{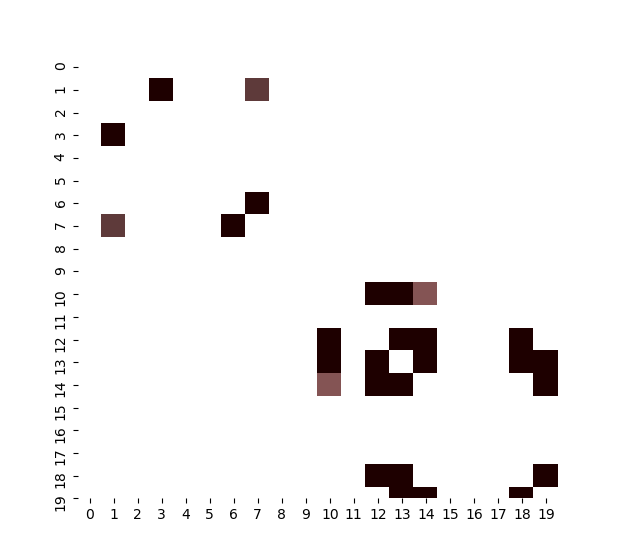}
    \subcaption{$\psi = 10^{-7}$ (APS  = 0.39)}
    \label{fig:exp_regu_graphs_spars7}
\end{subfigure}%
\begin{subfigure}[t]{0.19\textwidth}
    \centering
    \includegraphics[width = 1\linewidth]{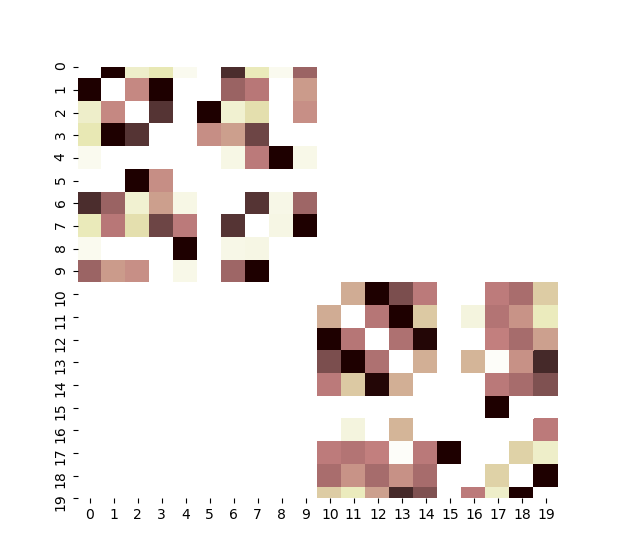}
    \subcaption{$\psi = 10^{-5}$ (APS  = 0.86)}
    \label{fig:exp_regu_graphs_spars5}
\end{subfigure}%
\begin{subfigure}[t]{0.19\textwidth}
    \centering
    \includegraphics[width = 1\linewidth]{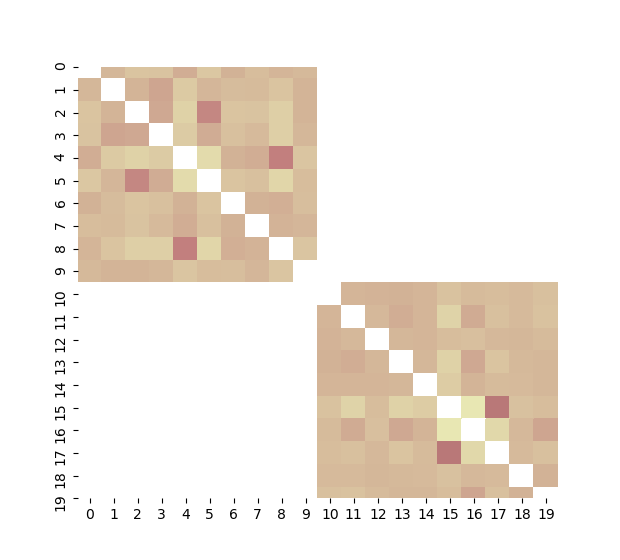}
    \subcaption{$\psi = 10^{-3}$ (APS  = 0.80)}
    \label{fig:exp_regu_graphs_spars3}
\end{subfigure}%
\begin{subfigure}[t]{0.19\textwidth}
    \centering
    \includegraphics[width = 1\linewidth]{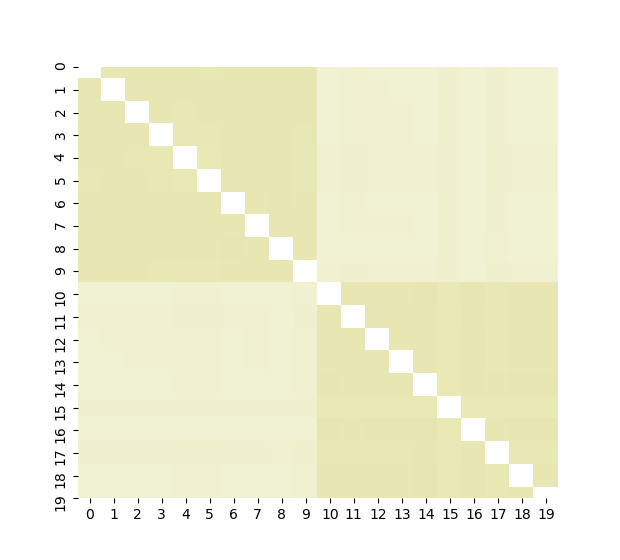}
    \subcaption{$\psi = 10^{-1}$ (APS  = 0.52)}
    \label{fig:exp_regu_graphs_spars2}
\end{subfigure}%
\begin{subfigure}[t]{0.21\textwidth}
    \centering
    \includegraphics[width = 1\linewidth]{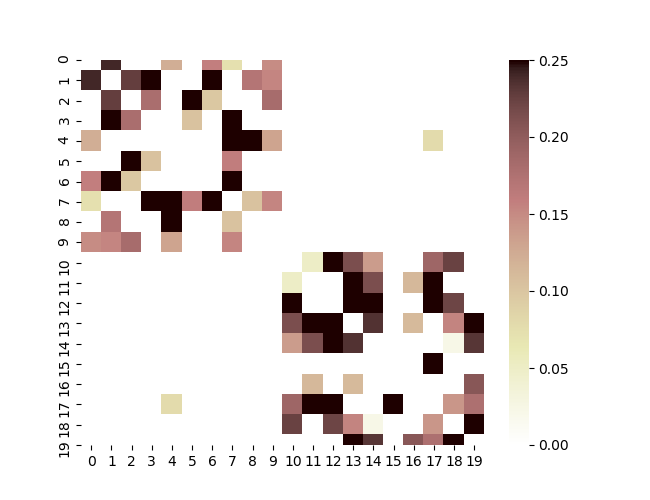}
    \subcaption{Groundtruth $\mathcal{G}_{\text{SBM}}$}
    \label{fig:exp_regu_graphs_GT}
\end{subfigure}
\begin{subfigure}[t]{0.19\textwidth}
    \centering
    \includegraphics[width = 1\linewidth]{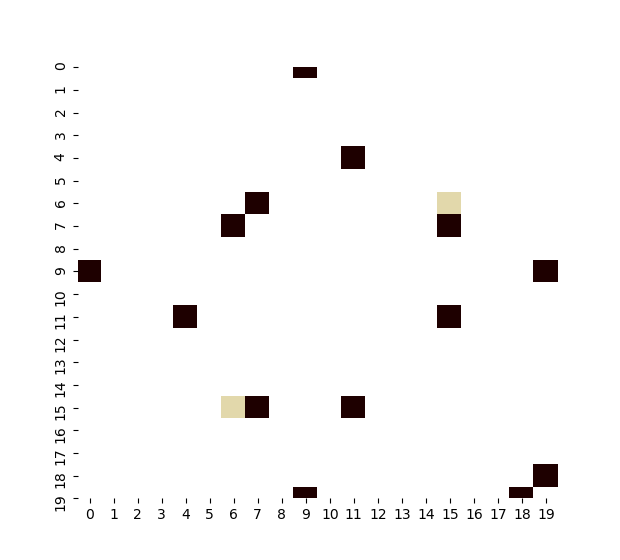}
    \subcaption{$\psi = 10^{-7}$ (APS  = 0.33)}
    \label{fig:exp_regu_graphs_spars7}
\end{subfigure}%
\begin{subfigure}[t]{0.19\textwidth}
    \centering
    \includegraphics[width = 1\linewidth]{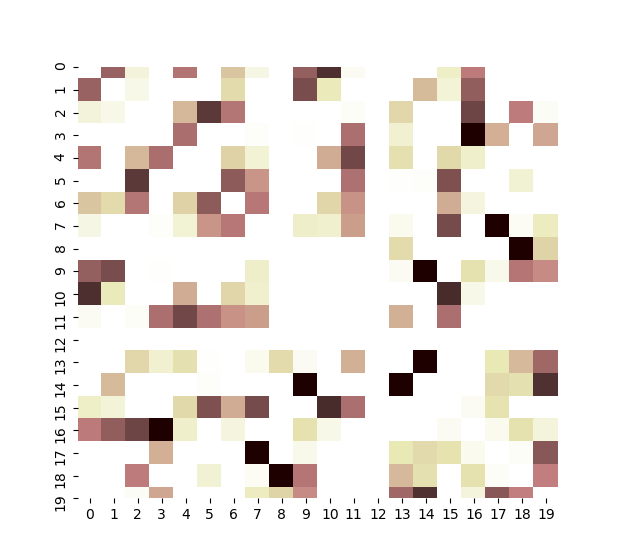}
    \subcaption{$\psi = 10^{-5}$ (APS  = 0.90)}
    \label{fig:exp_regu_graphs_spars5}
\end{subfigure}%
\begin{subfigure}[t]{0.19\textwidth}
    \centering
    \includegraphics[width = 1\linewidth]{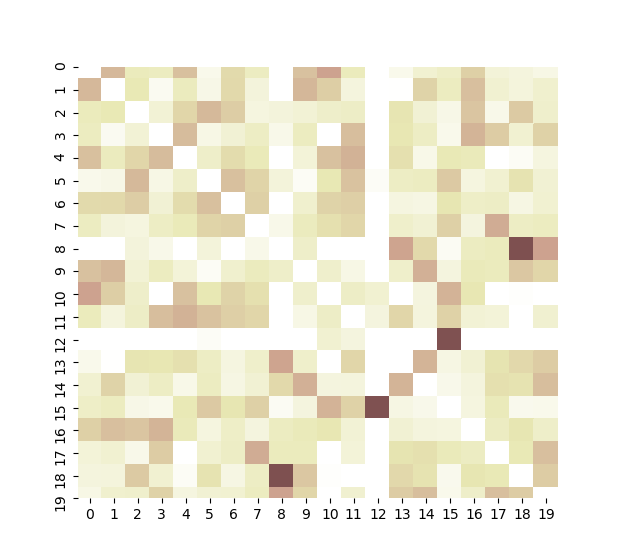}
    \subcaption{$\psi = 10^{-3}$ (APS  = 0.78)}
    \label{fig:exp_regu_graphs_spars3}
\end{subfigure}%
\begin{subfigure}[t]{0.19\textwidth}
    \centering
    \includegraphics[width = 1\linewidth]{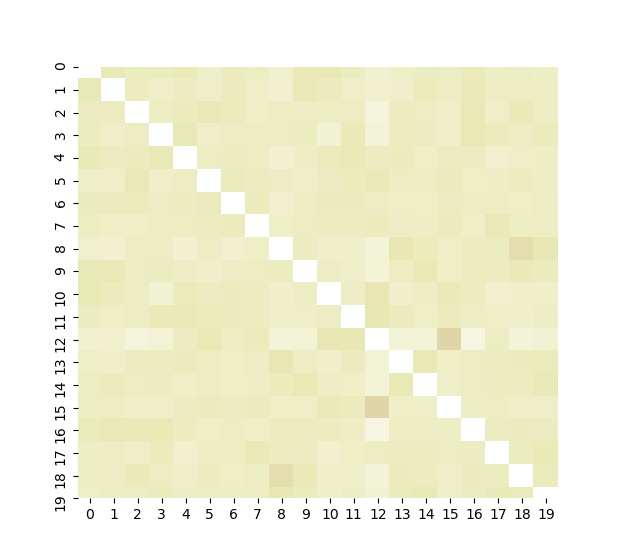}
    \subcaption{$\psi = 10^{-1}$ (APS  = 0.48)}
    \label{fig:exp_regu_graphs_spars2}
\end{subfigure}%
\begin{subfigure}[t]{0.21\textwidth}
    \centering
    \includegraphics[width = 1\linewidth]{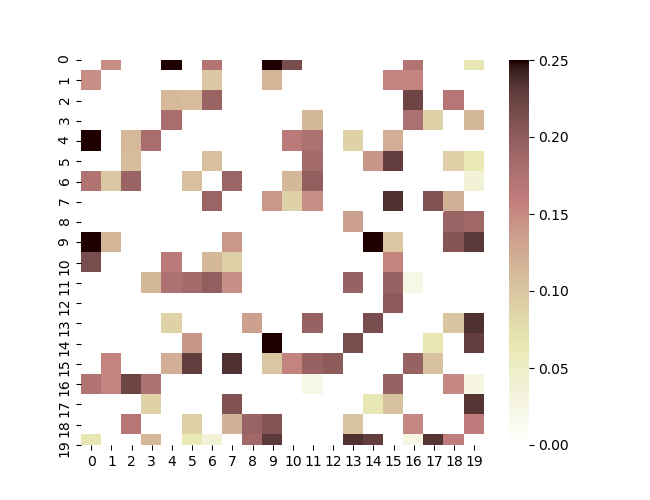}
    \subcaption{Groundtruth $\mathcal{G}_{\text{ER}}$}
    \label{fig:exp_regu_graphs_GT}
\end{subfigure}
\begin{subfigure}[t]{0.19\textwidth}
    \centering
    \includegraphics[width = 1\linewidth]{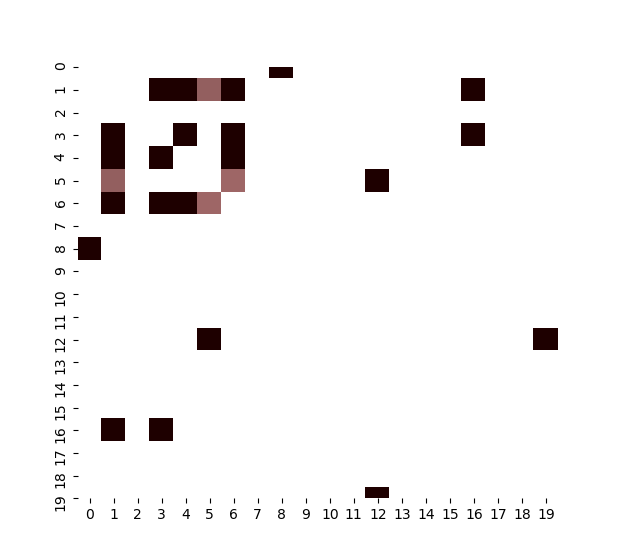}
    \subcaption{$\psi = 10^{-7}$ (APS  = 0.40)}
    \label{fig:exp_regu_graphs_spars7}
\end{subfigure}%
\begin{subfigure}[t]{0.19\textwidth}
    \centering
    \includegraphics[width = 1\linewidth]{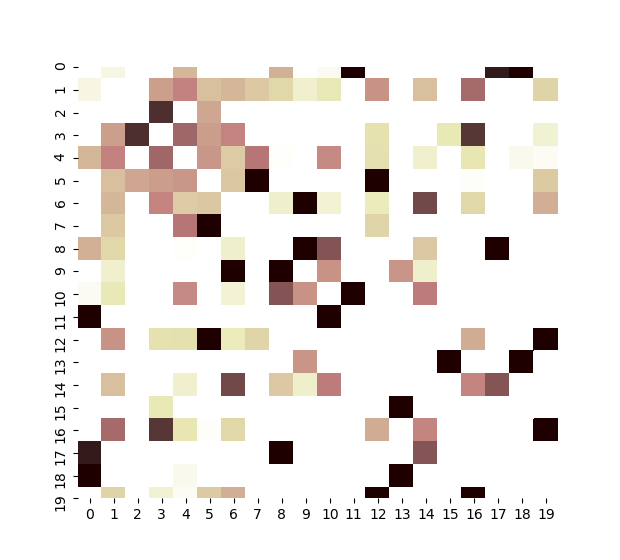}
    \subcaption{$\psi = 10^{-5}$ (APS  = 0.89)}
    \label{fig:exp_regu_graphs_spars5}
\end{subfigure}%
\begin{subfigure}[t]{0.19\textwidth}
    \centering
    \includegraphics[width = 1\linewidth]{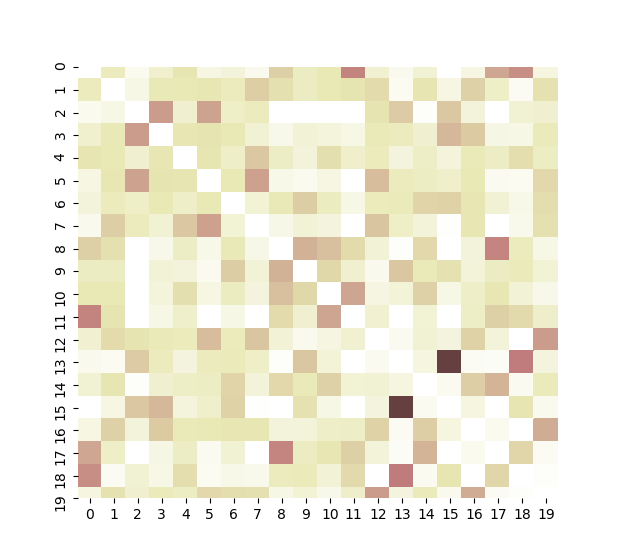}
    \subcaption{$\psi = 10^{-3}$ (APS  = 0.80)}
    \label{fig:exp_regu_graphs_spars3}
\end{subfigure}%
\begin{subfigure}[t]{0.19\textwidth}
    \centering
    \includegraphics[width = 1\linewidth]{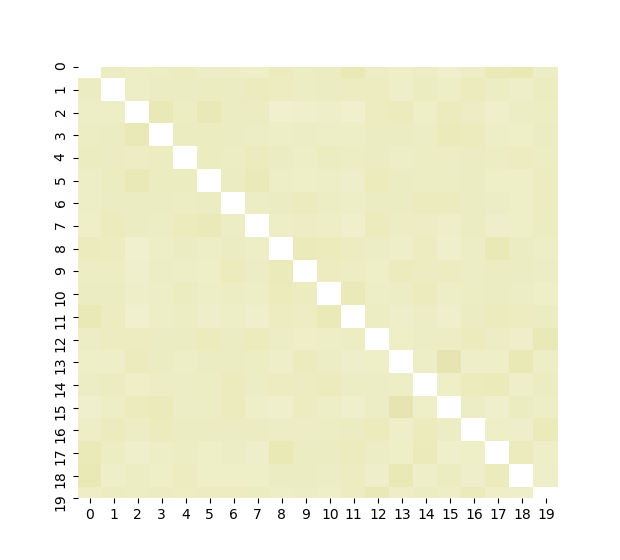}
    \subcaption{$\psi = 10^{-1}$ (APS  = 0.67)}
    \label{fig:exp_regu_graphs_spars2}
\end{subfigure}%
\begin{subfigure}[t]{0.21\textwidth}
    \centering
    \includegraphics[width = 1\linewidth]{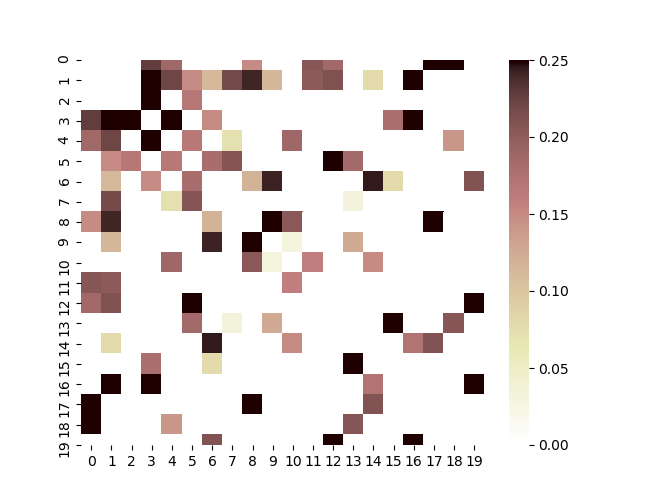}
    \subcaption{Groundtruth $\mathcal{G}_{\text{BA}}$}
    \label{fig:exp_regu_graphs_GT}
\end{subfigure}
\caption{Graph sparsity with respect to $\psi$. The first row (a)-(d): the learned $\mathcal{G}_{\text{SBM}}$; the second row (f)-(i): the learned $\mathcal{G}_{\text{ER}}$; the third row (k)-(n): the learned $\mathcal{G}_{\text{BA}}$, all from \textbf{KGL} with $\alpha = 10^{-1}$, $\rho = 10^{-2}$ and a fixed $\psi$. The respective groundtruth graphs are shown in the last column.}
\label{fig:exp_ergu_G_heatmap}
\end{figure*}

\begin{figure*}[]
\centering
\begin{minipage}[]{0.25\textwidth}
    \centering
    \includegraphics[width = 1\linewidth]{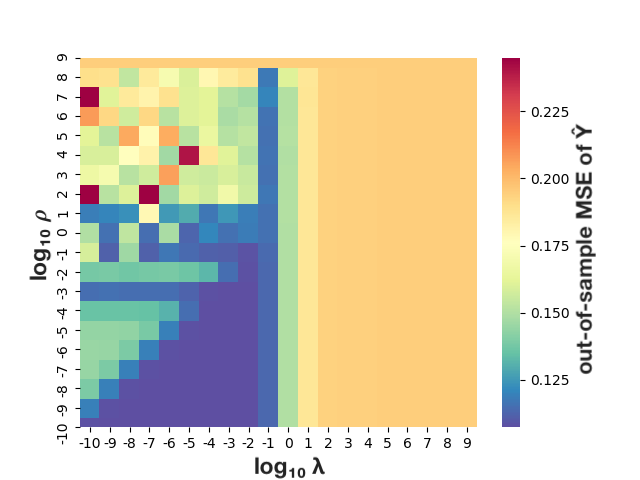}
    \subcaption{$\psi = 10^{-7}$}
    \label{fig:exp_regu_Y_ofs_psiN7}
\end{minipage}%
\begin{minipage}[]{0.25\textwidth}
    \centering
    \includegraphics[width = 1\linewidth]{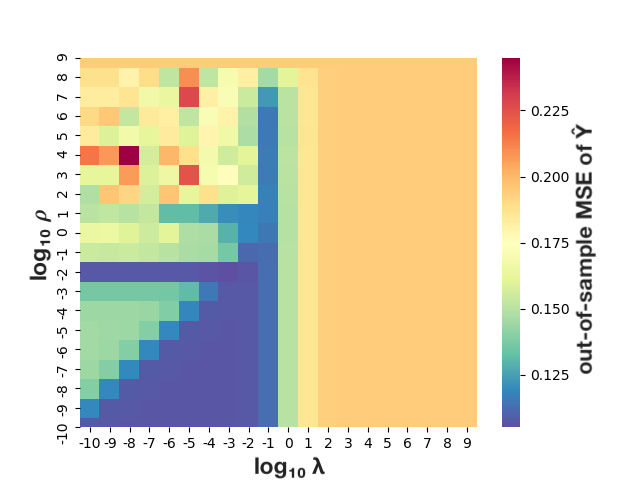}
    \subcaption{$\psi = 10^{-5}$}
    \label{fig:exp_regu_Y_ofs_psiN5}
\end{minipage}%
\begin{minipage}[]{0.25\textwidth}
    \centering
    \includegraphics[width = 1\linewidth]{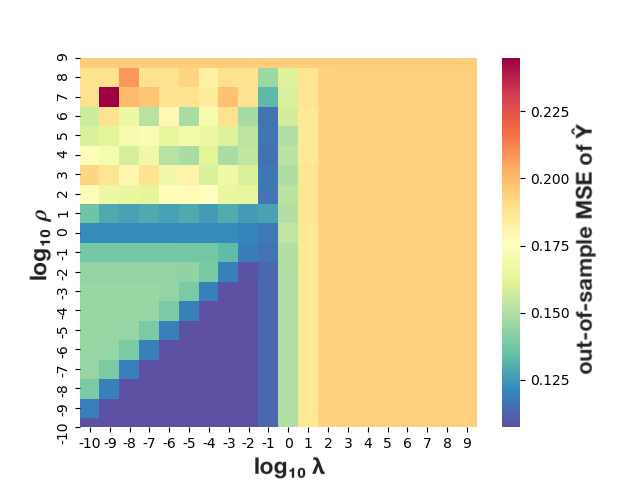}
    \subcaption{$\psi = 10^{-3}$}
    \label{fig:exp_regu_Y_ofs_psiN3}
\end{minipage}%
\begin{minipage}[]{0.25\textwidth}
    \centering
    \includegraphics[width = 1\linewidth]{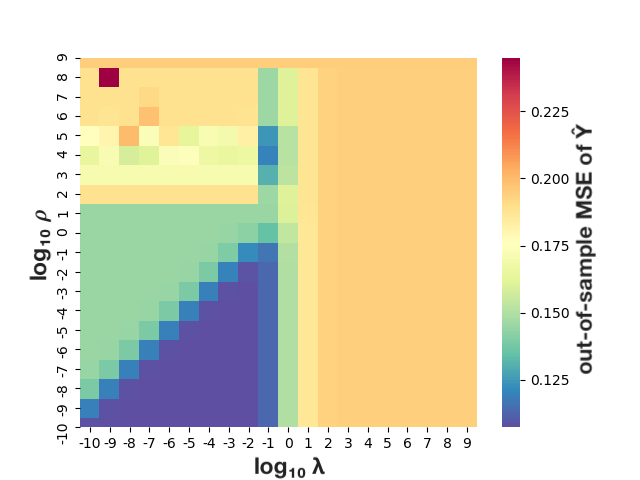}
    \subcaption{$\psi = 10^{-1}$}
    \label{fig:exp_regu_Y_ofs_psiN1}
\end{minipage}
\begin{minipage}[]{0.25\textwidth}
    \centering
    \includegraphics[width = 1\linewidth]{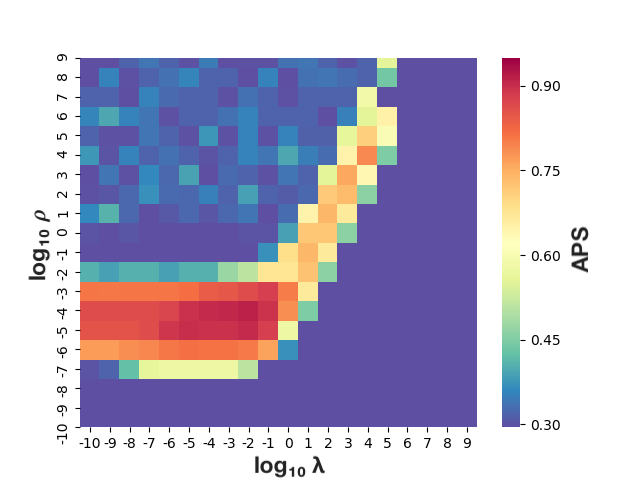}
    \subcaption{$\psi = 10^{-7}$}
    \label{fig:exp_regu_G_APS_psiN7}
\end{minipage}%
\begin{minipage}[]{0.25\textwidth}
    \centering
    \includegraphics[width = 1\linewidth]{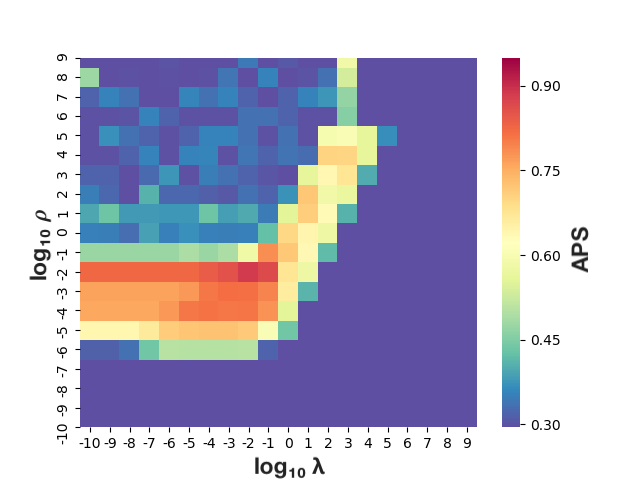}
    \subcaption{$\psi = 10^{-5}$}
    \label{fig:exp_regu_G_APS_psiN5}
\end{minipage}%
\begin{minipage}[]{0.25\textwidth}
    \centering
    \includegraphics[width = 1\linewidth]{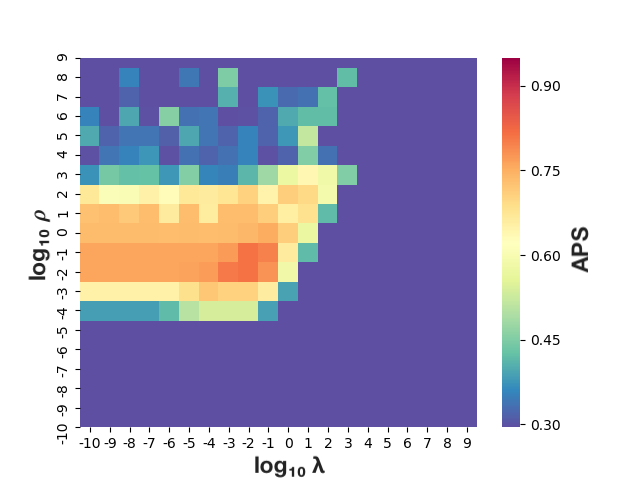}
    \subcaption{$\psi = 10^{-3}$}
    \label{fig:exp_regu_G_APS_psiN3}
\end{minipage}%
\begin{minipage}[]{0.25\textwidth}
    \centering
    \includegraphics[width = 1\linewidth]{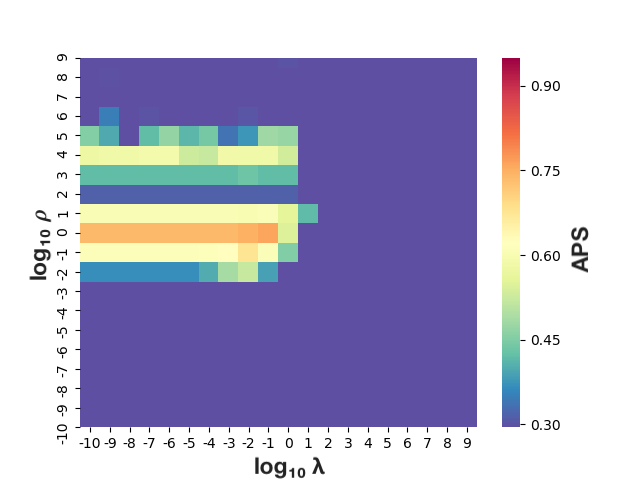}
    \subcaption{$\psi = 10^{-1}$}
    \label{fig:exp_regu_G_APS_psiN1}
\end{minipage}
\caption{The out-of-sample MSE for data matrix $\mathbf{Y}$ (the first row) and the APS of the learned graph 
(the second row) with respect to $\alpha$ and $\rho$, with 80\% entries of $\mathbf{Y}$ as training sample from \textbf{KGL} with a fixed $\psi$.}
\label{fig:exp_regularisation_path_accuracy}
\end{figure*}

\end{document}